\newtheorem{theorem}{Theorem}[section]
\newtheorem{prop}[theorem]{Proposition}
\theoremstyle{definition}
\theoremstyle{remark}
\numberwithin{equation}{section}
\newcommand\R{\mathbb{R}}
\newcommand\Z{\mathbb{Z}}
\def\R{\mathbb R}
\icmltitlerunning{Submission and Formatting Instructions for ICML 2023}
\begin{document}

\twocolumn[
\icmltitle{On the effectiveness of neural priors in modeling dynamical systems}



\icmlsetsymbol{equal}{*}

\begin{icmlauthorlist}
\icmlauthor{Sameera Ramasinghe}{equal,comp}
\icmlauthor{Hemanth Saratchandran}{equal,inst}
\icmlauthor{Violetta Shevchenko}{comp}
\icmlauthor{Simon Lucey}{inst}
\end{icmlauthorlist}


\icmlaffiliation{inst}{Australian Institute of Machine Learning, University of Adelaide, Adelaide SA, Australia}
\icmlaffiliation{comp}{Amazon, Australia}

\icmlcorrespondingauthor{Sameera Ramasinghe}{sameera.ramasinghe@adelaide.edu.au}
\icmlcorrespondingauthor{Hemanth Saratchandran}{hemanth.saratchandran@adelaide.edu.au}

\icmlkeywords{Machine Learning, ICML}

\vskip 0.3in
]



\printAffiliationsAndNotice{\icmlEqualContribution} 

\begin{abstract}
Modelling dynamical systems is an integral component for understanding the natural world. To this end, neural networks are becoming an increasingly popular candidate owing to their ability to learn complex functions from large amounts of data. Despite this recent progress, there has not been an adequate discussion on the architectural regularization that neural networks offer when learning such systems, hindering their efficient usage. In this paper, we initiate a discussion in this direction using coordinate networks as a test bed. We interpret dynamical systems and coordinate networks from a signal processing lens, and show that simple coordinate networks with few layers can be used to solve multiple problems in modelling dynamical systems, without any explicit regularizers. 
\end{abstract}

\section{Introduction}

Dynamical systems are systems whose state evolves over time. Modeling such systems from finite observations plays a major role in understanding, predicting, and controlling a vast array of physical and biological phenomena such as weather \cite{christensen2019reliable, knipp2016advances}, neuroscience \cite{izhikevich2007dynamical}, planetary motion \cite{koon2000dynamical, jiang2003bifurcation}, and molecular movements \cite{gorban2010principal, toni2010simulation}, among many others. Traditionally, analytical models derived from first principles played a key role in simulating dynamical systems. Nonetheless, a continuing concern of modeling dynamical system is that since measurements of physical quantities are typically obtained via sensors in the wild, they can often be noisy, irregular, and sparse. Since classical analytical tools for dynamical systems usually assume restrictive conditions, i.e., clean, regular, and relatively dense data, employing them on real-world data becomes less trivial. Further, although  analytical approaches usually benefit from explicit mathematical guarantees, their extrapolation to higher dimensional systems is often hindered by unrealistic conditions (e.g., sampling complexity) and impractical error bounds.

In contrast, there has been a recent surge of interest in using neural networks for modeling dynamical systems. With increasingly abundant computational resources and data, these methods have yielded impressive performance over classical simulation models. The underlying pillar of this success is the universal approximation properties of neural networks, that enables learning complex non-linear functions from data. Despite this trend, far too little  attention has been paid to the architectural regularization that neural networks implicitly offer in such settings. This lack of understanding obfuscates principled, efficient usage of neural networks in modeling dynamical systems, potentially leading to fairly involved architectures and explicit regularizers \cite{bakarji2022discovering, trischler2016synthesis, ku1995diagonal, chu2019adaptive, yeung2019learning}. Thus, this study strives to investigate the efficacy of implicit architectural bias that neural networks offer in the context of dynamical systems. To this end, we choose coordinate-networks as a test bed, a class of neural networks that is now ubiquitously being used across many computer vision tasks \cite{skorokhodov2021adversarial, chen2021learning, sitzmann2019scene, mildenhall2021nerf, li20223d, chen2022fully}. This choice is motivated by the simplicity of coordinate networks --- which are typically shallow fully connected networks --- and the strong architectural bias they offer when learning natural signals \cite{mildenhall2021nerf, tancik2020fourier}.  We begin with an interesting observation that both dynamical systems and coordinate-networks can be viewed through a signal processing lens. That is, a dynamical system can be  interpreted as a (multi-dimensional) signal that evolves over time. From this perspective,  modeling a dynamical system using finite measurements of physical quantities becomes analogous to recovering a multi-dimensional signal from discrete samples. On the other hand, we also note that coordinate-networks perform a similar task; given discrete coordinates and corresponding samples, coordinate-networks attempt to encode (reconstruct) a continuous signal. Inspired by this connection, we draw a parallel between dynamical systems and coordinate networks, and use sampling theory to bridge these two paradigms. First, we conduct a brief exposition of the Nyquist-Shannon sampling theory and show that coordinate-networks can be considered as generator functions under a generalized view of the former. Exploiting this insight, we propose a novel non-linear activation that allows coordinate networks to (theoretically) optimally reconstruct a given signal, while producing smooth first order derivatives of the network. With proper tuning of hyperparameters, our activation function enables controlling the bandwidth of the network, allowing the network to capture high-frequency dynamics while filtering noise. Then, we employ coordinate-networks across a series of physics problems, and demonstrate surprisingly improved, robust results compared to classical methods. It is important to note that across all considered problems, we only utilize the implicit bias of the neural architecture, omitting the need for explicit regularizers. 

Our contributions are as follows: 

\begin{itemize}
    \item We establish a parallel between dynamical systems and coordinate networks, and propose a novel activation function that performs better than existing activations. To the best of our knowledge, this work is the first to offer a theoretical comparison on the optimality of activation functions in reconstructing signals.

    \item We improve the results of the SINDY algorithm \cite{bakarji2022discovering} --- a method used to discover the governing equations of dynamical systems ---  by exploiting the smooth first order derivatives of coordinate-networks. We also demonstrate that our results are significantly robust to noise compared to the baseline.

    \item We show that coordinate networks implicitly learn the intrinsic rank of a given system from partial observations, eliminating the need for classical analytical tools such as time-delay embedding.

    \item We utilize coordinate networks for discovering the characteristics of  dynamical systems from partial observations. We further demonstrate that the recovered representations are extremely robust to random, sparse, and noisy measurements, compared to classical tools. 
    
    \item We demonstrate the efficacy of using coordinate networks for modeling higher dimensional systems, where the number of measurements are orders of magnitude lower than the optimal Nyquist rate.

\end{itemize}

\section{Preliminaries}

\subsection{Dynamical systems}

Dynamical systems can be defined in terms of a time dependant state space $\textbf{x}(t) \in \mathbb{R}^D$ where the time evolution of $\mathbf{x}(t)$ can be described via a differential equation,

\begin{equation}
\label{eq:dynamical_system}
    \frac{ d \mathbf{x}(t)}{dt} = f(\mathbf{x}(t), \alpha),
\end{equation}

where $f$ is a non-linear function and $\alpha$ are a set of system parameters. The solution to the differential equation \ref{eq:dynamical_system} gives the time dynamics of the state space $\textbf{x}(t)$. In practice, we only have access to discrete measurements $[\textbf{y}(t_1), \textbf{y}(t_2), \dots \textbf{y}(t_Q)]$ where $\textbf{y}(t) = g(\textbf{x}(t)) + \eta$ and  $\{t_n\}_{n=1}^{Q}$ are discrete instances in time. Here, $g(\cdot)$  can be the identity or any other non-linear function, and $\eta$ is noise. Thus, the central challenge in modeling dynamical systems can be considered as recovering the characteristics of the state space from such discrete observations.

\subsection{Coordinate networks}
Consider an  $L$-layer coordinate network, $F_L$, with widths $\{n_0,\ldots ,n_L\}$. The output at layer $l$, denoted $f_l$, is given by

\begin{equation}
\label{eq:coordinate_network}
f_l(x) = 
 \begin{cases}
            x, & \text{if}\ l = 0 \\
            \phi(W_lF_{l-1} + b_l), & \text{if}\ l \in [L-1] \\
            W_{L-1}F_{L-1} + b_L, & 
            \text{if}\ l = L
        \end{cases}
\end{equation}
where $W_l \in \R^{n_{l}\times n_{l-1}}$, $b_l \in \R^{n_l}$ are the weights biases respectively of the network, and $\phi$ is a non-linear activation function. Although the above formulation is identical to fully connected networks, they differ from traditional neural networks by usage. In contrast to the mainstream utilization of neural networks, where (very) high-dimensional inputs such as images, videos are mapped to a label space, coordinate networks are treated as a continuous data structure that \emph{encodes} a signal.  The inputs to coordinate networks are low-dimensional, discrete coordinates, \textit{e.g.},
$(x,y)$, and the outputs are samples of a particular signal at corresponding coordinates, \textit{e.g.} pixel intensities of an image sampled at $(x,y)$. The optimization minimizes the mean squared error (MSE) loss between the ground truth and the network predictions. In the above example, the coordinate network can be considered a continuous representation of an image, which can be queried up to extreme resolutions. Further, the activations used in coordinate networks determine their characteristics. For instance, ReLU activations have shown to suffer from spectral bias, hindering their performance in encoding high-frequency content, whereas recently proposed Gaussian \cite{rahaman2019spectral} and sinusoidal \cite{sitzmann2020implicit} activations allow high-fidelity signal reconstructions. It is also a common practice to use a positional encoding layer with coordinate networks, which modulates input coordinates with sin and cosine functions, capturing high-frequency content.

\section{Dynamical systems and coordinate networks}

We note that modeling dynamical systems and encoding signals using coordinate networks are analogous tasks. That is, modeling dynamical systems can be interpreted as recovering characteristics of a particular system via measured physical quantities over time intervals. Similarly, coordinate networks are used to recover a signal given discrete samples. Hence, in this section, we analyze coordinate networks from sampling theory based perspective, and propose an activation function for better signal reconstruction. Coordinate networks equipped with the newly proposed activation function will be validated on several problems in later sections.


\subsection{Revisiting sampling thoery}

The sampling theory concerns \emph{bandlimited} signals. A signal is bandlimited if, and only if, the magnitude of its Fourier spectrum is zero beyond a certain threshold frequency. More formally, let $f$ denote a continuous signal that is $\Omega$-band limited, meaning its Fourier transform $\widehat{f}(s) = 0$ for all $\vert s\vert > \Omega$.  If $f \in L^1(\R)$ is an $\Omega$-band limited signal, then the 
Nyquist-Shannon sampling theorem \cite{zayed2018advances} gives
\begin{equation}
	f(x) = \sum_{n=-\infty}^{\infty}f\bigg{(}\frac{n}{2\Omega} \bigg{)}
	sinc\bigg{(}2\Omega\big{(}x - \frac{n}{2\Omega} \big{)}\bigg{)}
\end{equation}
where the equality means converges in the $L^2$ sense. Thus, by sampling a signal at the lattice points 
$\frac{n}{2\Omega}$, for $n \in \Z$, and taking shifted sinc functions, it is possible to recover the signal provided we sample at a frequency of at least $2\Omega$-Hertz.
Theoretically, the theorem indicates that one would need an infinite number of samples for perfect reconstruction. This is, of course, not possible in practice. Further, it should be noted that the sampling theorem is an idealization of the real-world; natural signals are not always bandlimited. However, as natural signals tend to contain their dominant frequency modes at lower energies, one can project the original signal into a space of bandlimited functions with a finite dimension to get a good reconstruction. The sampling theory --- in its original form --- is only applicable to one-dimensional signals. However, it can be extended to higher dimensions in a straightforward manner. The main culprit of the sampling theory is the curse of dimensionality;  the exponential increase in the number of sample points needed to reconstruct a high-mode signal. This is a mathematical consequence of the fact that volumes of many mathematical shapes grow exponentially with dimension. This behaviour is detrimental for modeling dynamical systems using classical tools, e.g., dynamic mode decomposition (DMD), in higher dimensions, as the number of physical measurements required quickly becomes infeasible as the dimensions grow. In contrast, we show that coordinate networks can perform remarkably well in cases where the number of samples falls well below the Nyquist rate (see Sec.~\ref{sec:prediction}).

\subsection{Coordinate-networks for signal reconstruction}

In the previous section, we discussed how an exact reconstruction of a bandlimited signal could be achieved via a linear combination of shifted sinc functions. Thus, it is intriguing to explore if an analogous connection can be drawn to coordinate networks. In this picture, we fix a function $F$ and define a space $V(F)$ by
\begin{equation*}
 V(F) = \bigg{\{} s(x) = \sum_{k =-\infty}^{\infty}a(k)F(x-k) : a \in l^2  \bigg{\}},
\end{equation*}
where $l^2$ denotes the space of square summable sequences. In other words, a function $s \in V(F)$ is characterized by the sequence $a$, which is to be thought of as the discrete signal representation of $s$. As in the case of the sampling theorem, from the previous section, shifts of the function $F$ and the coefficients are enough to reconstruct $s$.

In order for the space $V(F)$ to be a good model to do signal processing, it is generally required that the functions $\{F_k = F(x - k)\}_{\Z}$ should form a Riesz basis, see section \ref{app:riesz} and \cite{unser2000sampling} for a detailed discussion on Riesz bases. The second requirement is that they satisfy the partition of unity condition
\begin{equation}\label{PUC}
 \sum_{k \in Z}F(x+k) = 1\text{, } \forall x \in \R.
\end{equation}
The partition of unity condition allows the space $V(F)$ to have the capability of approximating any input function arbitrarily close by selecting a sufficiently small sampling step. Thus it should be thought of as a generalisation of the Nyquist criterion in signal processing. We refer the reader to \cite{unser2000sampling} for details on how \eqref{PUC} determines a Nyquist-type sampling criterion. Interestingly, we observe that coordinate networks implicitly perform a similar task of reconstructing a signal via shifted basis functions, as discussed next. 

Consider a coordinate network formulated as Eq.~\ref{eq:coordinate_network}. Let 
$X \in \R^{n_0\times N}$ and $Y \in \R^{n_L \times N}$ denote the input coordinates and the corresponding signal measurements, where $N$ denotes the number of samples. For $X = [x_1,\ldots ,x_N]^T\in \R^{N\times n_0}$, the feature matrix at layer $l$ is given by $F_l = [f_l(x_1),\ldots ,f_l(x_N)]^T \in \R^{N\times n_l}$. 
For simplicity, now consider a 2-layer coordinate network with one-dimensional output. Such a network can be expressed by
\begin{equation}\label{NN_sampling_view}
F(X) = W_2\phi(W_1X + b_1) + b_2.
\end{equation} 
This equation exhibits the interesting observation that the output function is being constructed via shifted samples of the non-linearity. When training, the network seeks to optimise the weights and biases so as to fit the training labels optimally. In this way, the training can be thought of as trying to pick suitable shifts and bandwidth of the non-linearity that can reconstruct the output signal. However, note that in the previous sections, the sampling studied has all been uniform sampling. In general, sampling theory must assume uniform sampling, for otherwise, the methods of Fourier analysis cannot be used to get a signal reconstruction theorem. While there have been various algorithms to tackle non-uniform sampling for signal reconstruction, one of the highlight points of reconstructing with a coordinate network is that it has no issues with non-uniform samples (see Sec.~\ref{sec:dynamics}). 


\begin{figure*}[!htp]
    \centering
    \includegraphics[width=1.9 \columnwidth]{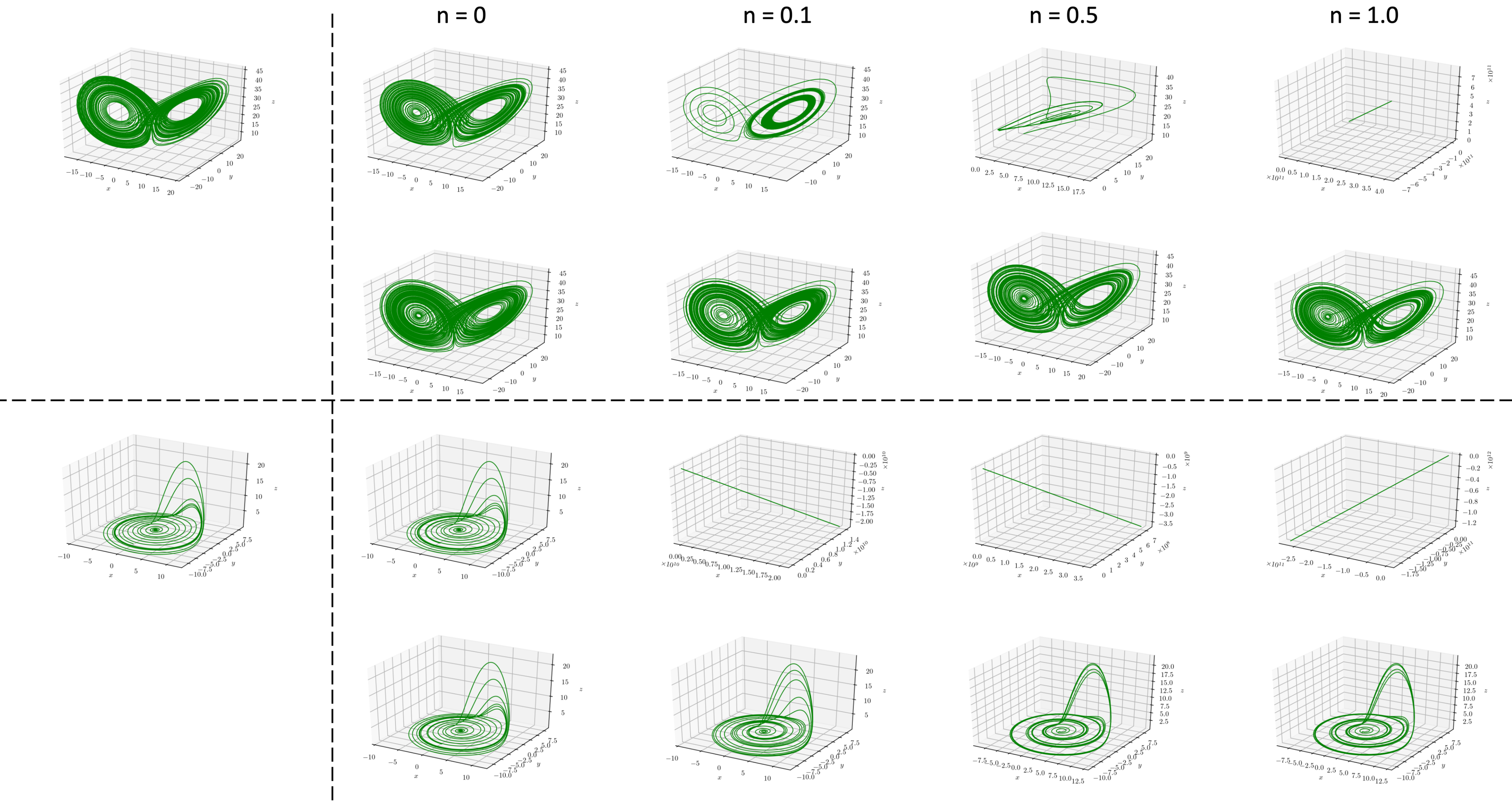}
    \vspace{-2em}
    \caption{ We use coordinate networks to improve the results of the SINDy algorithm. The top block and the bottom block demonstrate experiments on the Lorenz system and the Rossler system, respectively. In each block, the top row and the bottom row represent the results of the baseline SINDy algorithm and the improved version (using coordinate networks). As evident, coordinate networks can be used to obtain significantly robust results.}
    \label{fig:sindy}
\end{figure*}

\textbf{Activations.} The discussion thus far asserted that encoding signals using coordinate networks can be considered as reconstructing signals with shifted basis functions, which are essentially the type of activation functions used. One can immediately see that, from a theoretical perspective, using the sinc function as the activation should be optimal for reconstructing signals. Indeed, we utilize sinc as the activation function in  our experiments and observe better performance compared to previously proposed Gaussian or Sinusoidal activations (Fig. \ref{fig:sindyq}). We speculate that the sinc activations can potentially improve the results across many other computer vision tasks where coordinate networks are used. Nonetheless, we limit our experiments to dynamical systems in this work. On the theoretical side, what makes sinc an optimal signal reconstructor is the fact that it generates a Riesz basis that satisfies the \textit{partition of unity} condition \eqref{PUC} \cite{unser2000sampling}. 
In \cref{app:gaussian reconstruct}, we show that Gaussian functions generate a Riesz basis but only satisfy the partition of unity condition \eqref{PUC} approximately, making them inferior to the sinc function for sampling. We also show that sinusoidal functions are only optimal for sampling periodic signals, see \cref{app:sinusoid}. Finally, we show that the
 ReLU function  does \emph{not} generate a Riesz basis (proposition \ref{relu_riesz_details}) and catastrophically fail to satisfy \eqref{PUC}, see \cref{app:relu}. However, it should be noted that this result does not undermine the implicit regularization properties of ReLU activations in recovering low-frequency signals.

\subsection{Lipschitz constant of coordinate networks}

A major obstacle to modelling dynamical systems is the noisy measurements. Noise typically gets amplified through non-linear systems, exhibiting deleterious effects on classical analytical tools. On the other hand, we observe that by controlling the hyparparameters of the sinc function, it is possible to smoothen the signal encoded by coordinate networks, making them act as implicit noise filters (see Fig. \ref{fig:omega}). To explain this behavior, we show below that the hyperparameters of the sinc activation can manipulate the Lipschitz constant of the network. We also offer an equivalent result for Gaussians in the Appendix (theorem \ref{layer_lipshitz_gauss}).




\begin{theorem}\label{layer_lipshitz_sinc}
Let $f_L$ denote a neural network emplying a sinc non-linearity, 
$sinc(\omega x) = \frac{sin(\omega x)}{\omega x}$. Then the Lipshitz constant of $f_k$, for $1 \leq k \leq L$ increases as 
$\omega$ increases. In other words, increasing $\omega$ increases the kth-layers Lipshitz constant.
\end{theorem}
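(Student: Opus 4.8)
The plan is to reduce the claim to a one–dimensional statement about the activation and then propagate it through the layers via the composition rule for Lipschitz maps. First I would compute the derivative of the scalar activation: writing $u=\omega x$,
\[
\frac{d}{dx}\operatorname{sinc}(\omega x)\;=\;\omega\,g(\omega x),\qquad g(u):=\frac{u\cos u-\sin u}{u^{2}},
\]
and observe that $g$ is a fixed (i.e.\ $\omega$-independent) bounded continuous function with $g(0)=0$ and $g(u)\to 0$ as $|u|\to\infty$, so that $C_{0}:=\sup_{u\in\R}|g(u)|$ is a finite positive universal constant. By the mean value theorem this yields $|\operatorname{sinc}(\omega a)-\operatorname{sinc}(\omega b)|\le \omega C_{0}\,|a-b|$, and this bound is sharp (it is approached wherever the argument hits a maximiser of $|g|$). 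Hence the scalar non-linearity $x\mapsto\operatorname{sinc}(\omega x)$ has Lipschitz constant exactly $\omega C_{0}$, which is linear and strictly increasing in $\omega$ on $(0,\infty)$; applied coordinatewise, the map $\Phi(y)_{i}=\operatorname{sinc}(\omega y_{i})$ is likewise $\omega C_{0}$-Lipschitz.

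Next I would write $f_{k}$ as the composition of the affine maps $A_{j}(y)=W_{j}y+b_{j}$, whose Lipschitz constants are the operator norms $\|W_{j}\|$, with the coordinatewise activation $\Phi$. Since the Lipschitz constant of a composition is at most the product of the individual ones, one gets
\[
\operatorname{Lip}(f_{k})\;\le\;(\omega C_{0})^{\,m}\,\prod_{j=1}^{k}\|W_{j}\|,
\qquad
m=\begin{cases} k, & 1\le k\le L-1,\\ L-1, & k=L,\end{cases}
\]
because every hidden layer contributes one factor $\omega C_{0}$ and the output layer contributes none. As $m\ge 1$ for all $k$ with $1\le k\le L$ (here $L\ge 2$) and the remaining factor $C_{0}^{\,m}\prod_{j}\|W_{j}\|$ is independent of $\omega$, the right-hand side is a strictly increasing function of $\omega$ on $(0,\infty)$, and so is the layerwise Lipschitz bound one actually tracks in practice.

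The step I expect to be the real obstacle is upgrading ``the standard upper bound increases'' to ``the Lipschitz constant itself increases.'' The pointwise Jacobian is $Df_{k}(x)=\prod_{j}\operatorname{diag}\!\big(\omega\,g(\cdot)\big)\,W_{j}$ and the true Lipschitz constant is $\sup_{x}\|Df_{k}(x)\|$, so one must check that this supremum scales with $\omega$ as asserted rather than being suppressed by the coupling between coordinates imposed by a single scalar (e.g.\ time) input. This is immediate when the biases vanish, since the substitution $x\mapsto x/\omega$ then factors the power $\omega^{\,m}$ out of $\sup_{x}\|Df_{k}(x)\|$ exactly; in the general nonzero-bias case I would either impose the mild genericity on $\{W_{j},b_{j}\}$ needed to let the activation derivatives simultaneously approach $\pm C_{0}$ along a sequence of inputs, or state the theorem for the product bound above, which is precisely the quantity relevant to the noise-filtering discussion. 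Establishing sharpness in full generality is the only non-routine point; the rest is the composition estimate together with the elementary analysis of $g$.
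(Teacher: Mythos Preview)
Your approach is essentially the same as the paper's: both factor the activation derivative as $\omega$ times a bounded $\omega$-free function of $\omega x$, write the layer-$k$ Jacobian as $\omega^{k}\prod_{l}\widetilde D_{k-l}(x)W_{k-l}$, and read off the scaling. The paper works directly with the Jacobian formula rather than first invoking the submultiplicativity of Lipschitz constants, but the content is identical.

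Where you differ is that you are more careful than the paper. You correctly flag that the residual factor $\sup_{x}\big\|\prod_{l}\widetilde D_{k-l}(x)W_{k-l}\big\|$ is not obviously $\omega$-independent, because the pre-activations at deeper layers already depend on $\omega$ through the earlier activations; hence pulling out $\omega^{k}$ does not by itself give an \emph{exact} scaling of the true Lipschitz constant. The paper does not address this point at all: it simply asserts that ``the operator norm of the Jacobian will scale by a factor of $\omega^{k}$ and hence the Lipschitz constant will scale by exactly that factor.'' So the gap you identify as ``the only non-routine point'' is real, and the paper's own argument does not resolve it either---it treats the product-bound heuristic as if it were the actual Lipschitz constant. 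Your proposed remedies (zero-bias rescaling, genericity, or stating the result for the layerwise product bound that is actually used downstream) are each a more honest formulation than what the paper proves.
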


(Proof in \ref{layer_lipshitz_sinc_proof}). Further, we assert that the Lipschitz constant of a network with sinc activation is inherently linked to the rank of the hidden layer representations. This provides an important, controllable architectural bias (Fig.\ref{fig:omega}).

\begin{theorem}
Let $f_L$ denote a coordinate neural network with activation 
$sinc(\omega x) = \frac{sin(\omega x)}{\omega x}$. Furthermore, fix a training data set $X = \{x_i\}$ sampled from a fixed training distribution $\mathcal{P}$.
Then increasing $\omega$ leads to on average an increase in the stable rank of the feature map $F_l$.
\end{theorem}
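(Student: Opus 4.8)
\emph{Proof plan.} Recall that the stable rank of a matrix is $\operatorname{srank}(A)=\|A\|_F^{2}/\|A\|_2^{2}$, that it always lies in $[1,\operatorname{rank}(A)]$, and that it equals $1$ exactly when $A$ has rank one. The plan is to prove the statement in the sharp form that is actually available: with the weights held fixed (say at initialisation), $\operatorname{srank}(F_l)$ sits at its floor value $1$ in the limit $\omega\to 0^{+}$, and it leaves that floor monotonically once $\omega>0$, on average over the draw of the data $X\sim\mathcal P^{N}$. Concretely, the two ingredients are (i) $\lim_{\omega\to 0^{+}}\operatorname{srank}(F_l)=1$, and (ii) the existence of $\omega^{\ast}>0$ such that $\omega\mapsto\mathbb E_{X\sim\mathcal P^{N}}\!\left[\operatorname{srank}(F_l)\right]$ is nondecreasing on $[0,\omega^{\ast}]$, with strict increase unless $\mathcal P$ is degenerate. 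The same parameter $\omega$ that drives this increase is, by Theorem~\ref{layer_lipshitz_sinc}, exactly the one that inflates the layerwise Lipschitz constant, which is the ``Lipschitz constant $\leftrightarrow$ rank'' link we want to exhibit.

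For (i): since $\operatorname{sinc}(\omega z)\to 1$ uniformly for $z$ in any bounded set as $\omega\to 0$, an induction on $l$ using \eqref{eq:coordinate_network} shows $f_l(x)\to\mathbf 1_{n_l}$ uniformly over the (bounded) data set; hence $F_l\to\mathbf 1_N\mathbf 1_{n_l}^{\top}$, a rank-one matrix, so $\operatorname{srank}(F_l)\to 1$. Thus for small $\omega$ the hidden representation is maximally collapsed --- the ``rank collapse'' endpoint of the claim.

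For (ii): expand $\operatorname{sinc}(\omega z)=1-\omega^{2}z^{2}/6+O(\omega^{4})$ and propagate it through \eqref{eq:coordinate_network}. One finds that the $x_i$-dependent (that is, rank-one-breaking) part of $f_l$ first appears at an order $\omega^{m_l}$, with $m_l$ growing linearly in the depth $l$, so $F_l(\omega)=\mathbf 1_N\mathbf 1_{n_l}^{\top}+R_l(\omega)+\omega^{m_l}G_l+\cdots$, where $R_l(\omega)$ collects the lower-order corrections (all $i$-independent, hence rank-one) and $G_l$ is an explicit polynomial matrix in $X$ and the weights whose rows are not all equal for generic data and weights. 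Applying first- and second-order eigenvalue perturbation to the top eigenvalue of $F_l^{\top}F_l$ around its unperturbed rank-one value $N n_l$ yields $\operatorname{srank}(F_l(\omega))=1+c_l\,\omega^{2m_l}+O(\omega^{2m_l+2})$, where $c_l\ge 0$ is a normalised version of the squared Frobenius norm of the component of $G_l$ outside the tangent space to the rank-one matrices at $\mathbf 1_N\mathbf 1_{n_l}^{\top}$; thus $c_l=0$ only in the non-generic case that the leading perturbation introduces no second singular direction. Taking $\mathbb E$ over $X\sim\mathcal P^{N}$ (and over the weights, if random) one checks $\mathbb E[c_l]>0$ whenever $\mathcal P$ is non-degenerate and the weight matrices have full rank, which gives the asserted monotone increase on a right-neighbourhood of $0$; the onset order $\omega^{2m_l}$ also shows, incidentally, that the effect weakens at deeper layers.

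The main obstacle is to push the argument beyond this perturbative window. As $\omega$ grows the features do not keep spreading indefinitely --- $\operatorname{sinc}(\omega z)$ decays like $1/(\omega|z|)$, so $F_l(\omega)$ eventually shrinks and its singular spectrum becomes a genuinely oscillatory function of $\omega$; consequently exact monotonicity in $\omega$ cannot hold and only an averaged, ``trend'' statement is available, which is precisely what the qualifier ``on average'' is absorbing. Making even the expectation-over-$X$ monotonicity rigorous throughout the practically relevant band would require controlling $\mathbb E\,\|F_l(\omega)\|_2^{2}$ against $\mathbb E\,\|F_l(\omega)\|_F^{2}$ uniformly in $\omega$, for example via concentration estimates for the feature Gram matrix $F_l F_l^{\top}$; absent such control, the honest deliverables are the $\omega\to 0$ floor, the perturbative growth in (ii), and the empirical confirmation in Fig.~\ref{fig:omega}.
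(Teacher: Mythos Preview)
Your argument is sound on its own terms but follows a genuinely different route from the paper. The paper's proof leans directly on Theorem~\ref{layer_lipshitz_sinc}: since the layerwise Lipschitz constant scales as $\omega^{k}$, the maximal pairwise feature separation $\|f_l(x)-f_l(y)\|/\|x-y\|$ grows with $\omega$; expanding $\|f_l(x)-f_l(y)\|^{2}=\|f_l(x)\|^{2}+\|f_l(y)\|^{2}-2\langle f_l(x),f_l(y)\rangle$ and invoking $|\operatorname{sinc}|\le 1$ to cap the first two terms by $2n_l$, the paper concludes that the row inner products must shrink on average, so the rows of $F_l$ orthogonalise and the singular spectrum spreads, raising the stable rank. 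You instead anchor at $\omega=0$, establish the collapse $F_l\to\mathbf{1}_N\mathbf{1}_{n_l}^{\top}$ so that $\operatorname{srank}\to 1$, and then Taylor-expand $\operatorname{sinc}$ through the layers to get $\operatorname{srank}(F_l)=1+c_l\,\omega^{2m_l}+\cdots$ with $\mathbb{E}[c_l]>0$ generically. What your route buys is a fully rigorous local statement together with a quantitative onset exponent (your observation that $m_l$ grows with $l$, so deeper layers respond more slowly, is additional information not present in the paper); what the paper's route buys is an argument phrased for all $\omega$ rather than only a right-neighbourhood of $0$, and it makes the intended Lipschitz-to-rank mechanism explicit --- which is precisely why this theorem is placed immediately after Theorem~\ref{layer_lipshitz_sinc}. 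Your candid acknowledgement that strict monotonicity cannot survive for large $\omega$ (because $\operatorname{sinc}(\omega z)$ oscillates and decays) is a valid caveat that the paper's heuristic simply absorbs into the ``on average'' qualifier.
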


(Proof in \ref{eq:stable rank}).



\section{Discovering governing equations}

SINDy algorithm aims to recover the governing equations of a dynamical system from discrete observations of underlying variables. Given a set of samples $\mathbf{Y} = [\mathbf{y}(t_1), \mathbf{y}(t_2), \dots \mathbf{y}(t_N) ] \in \mathbb{R}^{D \times N}$, SINDY computes $\dot{\mathbf{Y}} = [\dot{\mathbf{y}(t_1)}, \dot{\mathbf{y}(t_2)}, \dots \dot{\mathbf{y}(t_N)} ] \in \mathbb{R}^{D \times N}$ using a finite difference based or continuous approximation technique. Then, a library of $Q$ candidate functions are assumed to build the library matrix $\Theta (\textbf{Y}) = [ \theta_1(\textbf{\textbf{Y}}, ), \theta_2(\textbf{\textbf{Y}}, ), \dots, \theta_Q(\textbf{\textbf{Y}}, ) ]$. Finally, SINDY minimizes the loss,

\begin{equation}
    L_{S} = ||{\dot{\textbf{Y}} - \Theta (\textbf{Y})\Gamma}||^2_2 + \lambda ||{\Gamma}||^2_1,
\end{equation}

\begin{figure*}[!htp]
    \centering
    \includegraphics[width=1.9\columnwidth]{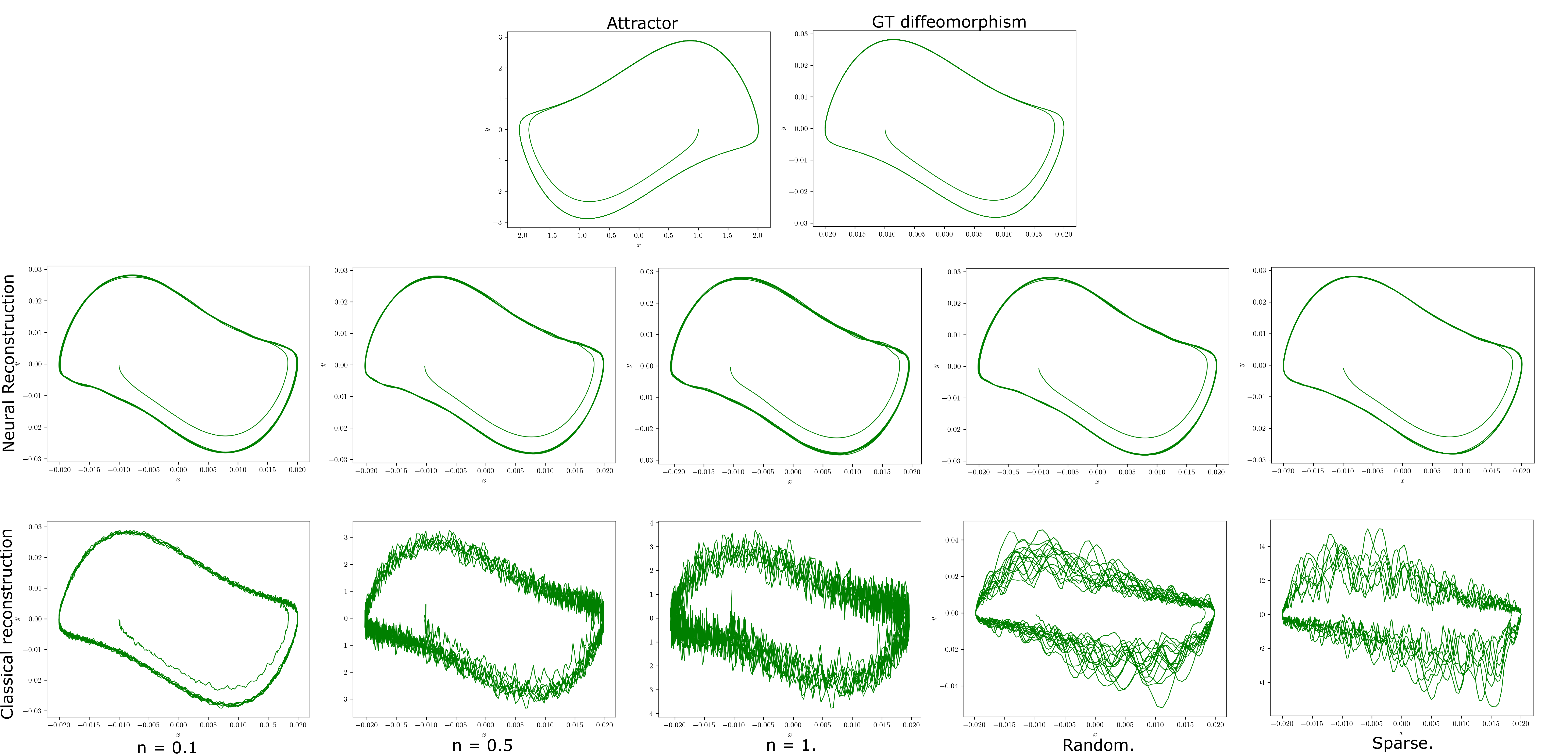}
    \vspace{-1em}
    \caption{Discovering the dynamics from partial observations. We use the Vanderpol system for this experiment. \textit{Top row:}  the original attractor and the diffeomorphism obtained by the SVD decomposition of the Hankel matrix (see Sec.~\ref{sec:dynamics}) without noise. \textit{Second row:} The same procedure is used to obtain the reconstructions with noisy, random, and sparse samples. \textit{Third row:} First, a coordinate network is used to obtain a continuous reconstruction of the signal from discrete samples, which is then used as a surrogate signal to resample measurements. Afterwards, the diffeomorphisms are obtained using those measurements. As shown, coordinate networks are able to recover the dynamics more robustly with noisy, sparse, and random samples. }
    \label{fig:diffeomorphism}
\end{figure*}

where $\Gamma$ is a sparsity matrix that choose candidate functions from $\Theta$ while enforcing sparsity. We note that coordinate networks offer two forms of important architectural biases here; suppose we train a coordinate network using $\{t_n\}^Q_{n=1}$ and $\textbf{Y}$ as inputs and labels, respectively, to reconstruct a continuous representation of $\textbf{Y}$. \textit{a}) by controlling $\omega$ of sinc functions (while training), one can filter high-frequency noise embedded in $\textbf{Y}$ and \textit{b}) It is possible to obtain measurements $\dot{\textbf{Y}}$ by computing the Jacobian of the network, utilizing smooth derivatives of sinc-activated coordinate networks. In comparison, we observe that ReLU activations yield inferior results, possibly due to the noisy first-order derivatives caused by their piece-wise linear approximation of functions \ref{fig:sindyq}. We perform an experiment to demonstrate the efficacy of these architectural biases below.

\textbf{Experiment 1: } We use the Lorenz system and the Rossler system (see \ref{ds_eqns}) for this experiment. We obtain $1000$ samples between $0$ and $100$ with an interval of $0.1$ to create $\textbf{Y}$. Then, we inject noise to $\textbf{Y}$ from a uniform distribution $\eta \sim U(-n, n)$ by varying $n$. As the baseline, for each noise scale, we used spectral derivatives to compute $\dot{\textbf{Y}}$. Note that we empirically chose spectral derivatives to obtain the best baseline after comparing other alternatives to compute $\dot{\textbf{Y}}$, including finite difference methods and polynomial approximations. As the competing method, again for each noise scale, we use a coordinate network to reconstruct a continuous signal by training the network on $\textbf{Y}$ samples. Afterwards, we compute the Jacobian of the network to compute $\dot{\textbf{Y}}$ on the same coordinates. Then, we minimize $L_S$ using the the computed $\textbf{Y}$ and $\dot{\textbf{Y}}$. Then, for both cases, we utilize the SINDy algorithm to obtain the governing equations of each system. The dynamics recovered from the discovered equations are compared in Fig. \ref{fig:sindy}. As evident, using coordinate networks for this particular task yields surprisingly robust results at each noise scale, compared to the baseline. We use a $4$-layer sinc-activated coordinate network for this experiment, where the width of each layer is $256$.

\begin{figure}[!htp]
    \centering
    \includegraphics[width=0.9\columnwidth]{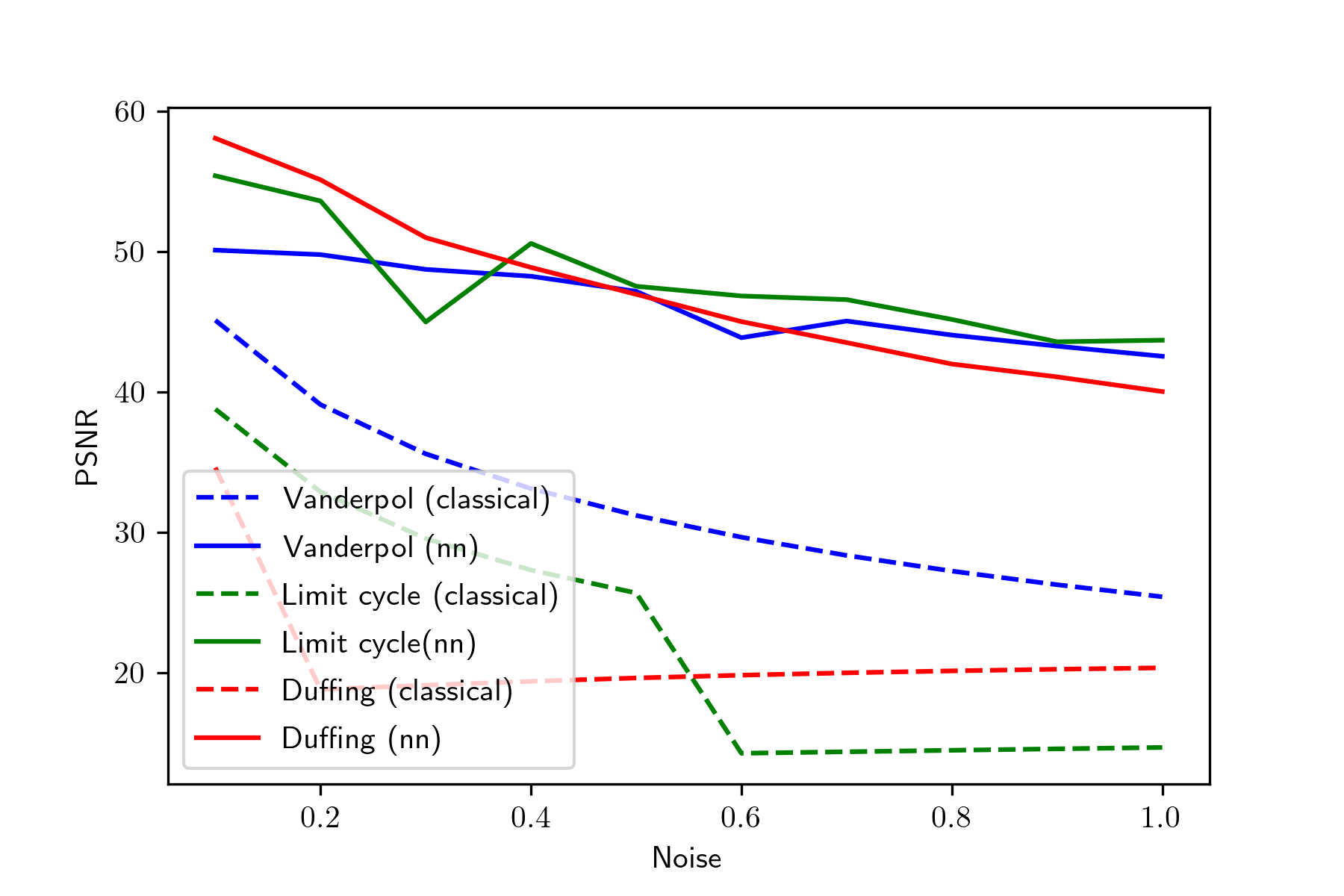}
    \vspace{-2.em}
    \caption{Discovering dynamics with partial observations. Reconstructions across several systems are compared.  }
    \label{fig:diff_q}
\end{figure}

\section{Mode discovery from partial observations.}
\label{sec:mode discovery}
\vspace{-0.5em}
Natural systems typically depend on multiple underlying variables (e.g., temperature, pressure, velocity etc.). The number of such factors is also known as the \emph{modes} or the \emph{intrinsic rank} of the system. However, in practical settings, the number of modes of a system is not apriori known, and only a subset of variables are measured. In such scenarios, ``time-delay-embeddings" provide an analytical tool to determine the number of modes of a system by \emph{only} observing the dynamics of a subset of variables. Remarkably, we show that the coordinate networks can be used to achieve the same task, \emph{without} time delay embedding. Further, we depict that this neural mode recovery is robust to higher dimensions, whereas classical time-delay-embedding fails in such cases. 

\begin{figure}[!htp]
    \centering
    \includegraphics[width=1.\columnwidth]{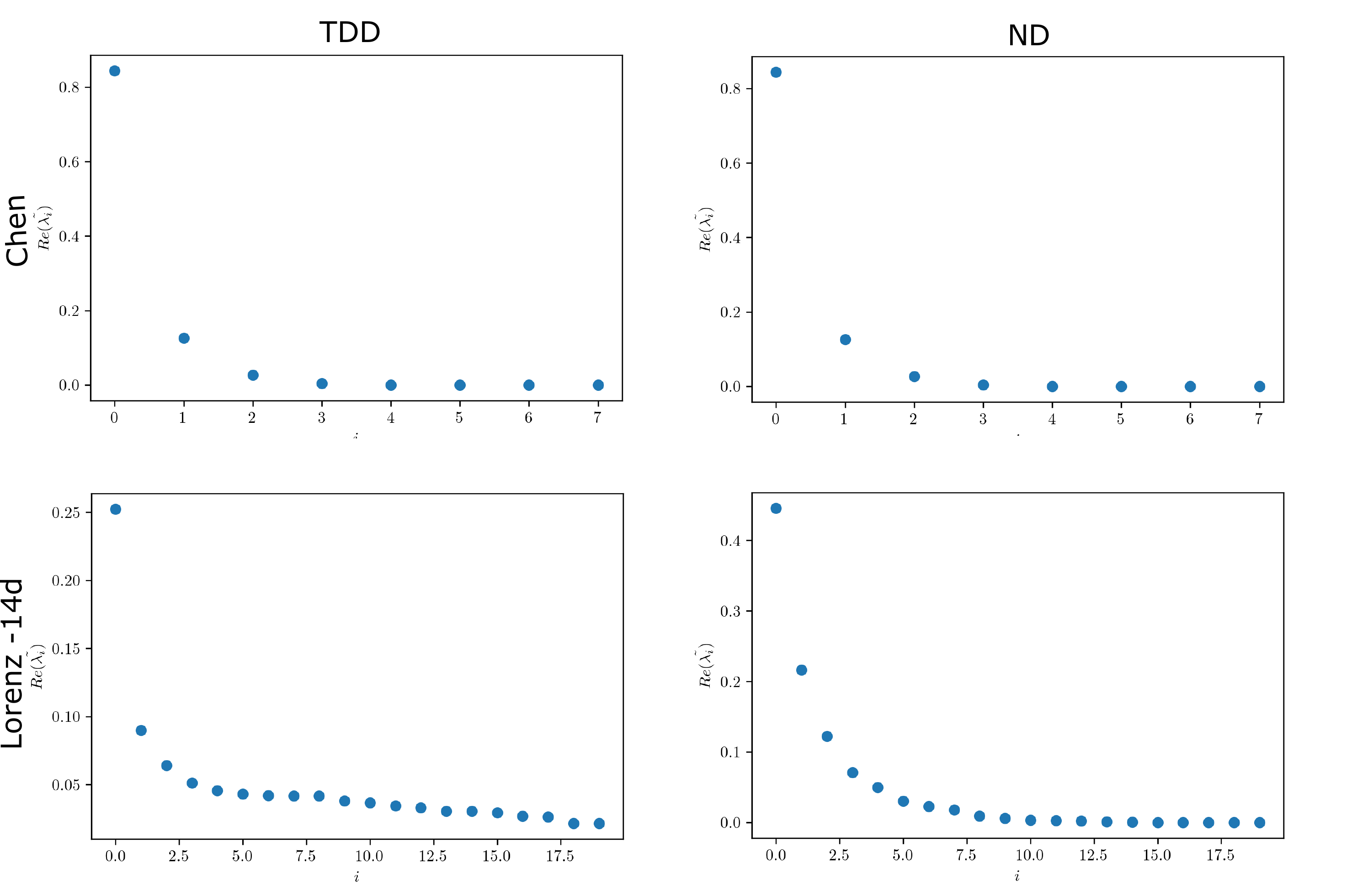}
    \vspace{-2em}
    \caption{ We compare time delay decomposition (TDD) and neural decomposition (ND) in discovering the modes of a dynamical system from partial observations. Remarkably, we find that coordinate networks can be used to discover the hidden modes of the system \emph{without} time delay embedding. In the Chen system, both methods perform well in identifying the three modes. In the 14-dim Lorenz system, TDD fails, while ND is able to identify 13 modes by only observing the dynamics of a single variable. }
    \label{fig:mode discovery}
\end{figure} 

\textbf{Time delay embedding.} Given a set of discrete samples of the observable variable 
 $[y_1(t_1), y_1(t_2), \dots, y_1(t_Q)]$, a Hankel matrix $\textbf{H}$ can be created by augmenting the samples as delay embeddings in each row:

 \begin{equation}
 \label{eq:hankel}
     \textbf{H} = \begin{bmatrix}
     y_1(t_1) & y_1(t_2) & \dots & y_1(t_n) \\
     y_1(t_2) & y_1(t_3) & \dots & y_1(t_{n+1}) \\
     \vdots & \vdots & \ddots & \vdots \\
     y_1(t_m) & y_1(t_{m+1}) & \dots & y_1(t_{m+n+1})
     \end{bmatrix}.
 \end{equation}

Then, eigen values of $\textbf{H}$ can be obtained by the SVD decomposition $\textbf{H} = \mathbf{U\Sigma V}^T$. The number of dominant eigenvalues can be used to identify the intrinsic rank of the system.

In contrast, we train a coordinate network on  $[y_1(t_1), y_1(t_2), \dots, y_1(t_Q)]$ to obtain a continuous representation of the signal. Let the mapping from the inputs $t$ to the penultimate layer be $\phi(t):\mathbb{R} \to \mathbb{R}^K$, where the penultimate layer is $K$-dimensional. Then, we extract the penultimate layer outputs $\Psi = [\phi(t_1)^T, \phi(t_2)^T, \dots, \phi(t_Q)^T] \in \mathbb{R}^{K \times Q}$, and perform SVD on $\Psi$ to extract singular values. Remarkably, we found that the number of dominant (non-zero) singular values is equal to the intrinsic rank of the system. This observation suggests that the architectural bias of coordinate networks implicitly leads to learning a Koopman basis in the penultimate layer. However, we limit the scope of this work to this empirical observation and leave a thorough theoretical exploration to future work.  

\textbf{Experiment 2: }We use a three-dimensional Chen system and a 14-dim Lorenz system (\ref{ds_eqns}) for this experiment. As a baseline, we perform time-delay decomposition on both signals. Then, we perform neural decomposition using a 4-layer coordinate network with $20$-width layer. The results are shown in Fig.~\ref{fig:mode discovery}. As depicted, the baseline fails to correctly identify the modes of the signal in higher dimensions, while the neural decomposition identifies $13$ modes correctly.  

\vspace{-0.5em}
\section{Discovering the dynamics of latent variables}

\label{sec:dynamics}

In Sec.\ref{fig:mode discovery}, we discussed how coordinate networks could be utilized to discover the number of modes (latent variables) of a dynamical system by observing the dynamics of a single variable. When only such partial measurements are available, it is generally not possible to derive a closed-form model of the system. However, Taken's theorem (\ref{sec:taken}) states that under certain conditions, it is possible to augment the partial measurements as delay embeddings, which yields an attractor that is diffeomorphic to the original attractor. This is an extremely powerful tool, as it enables discovering certain dynamics of a complex system by only observing a handful of variables. The procedure is as follows: First,   a Hankel matrix is computed as in Eq.~\ref{eq:hankel}. Then, the eigenvectors that span the time delay space are obtained by performing SVD decomposition on $\mathbf{H}$. As per Taken's theorem, one can now obtain a diffeomorphism of the original attractor via the dominant eigenvectors of the time delay space.


Nonetheless, the above procedure should adhere to restrictive conditions; a) the measurements should be equally spaced, and b) the intervals between measurements have to adhere to the condition $n \tau \approx 0.1$, where $n$ is the width of the Hankel matrix and $\tau$ is the time interval between two samples. Further, as we demonstrate, the obtained dynamics (diffeomorphisms) are extremely sensitive to noise. On the contrary, using a coordinate network to encode the original measurements as a continuous signal, and then using the coordinate network as a surrogate signal to create the Hankel matrix leads to surprisingly robust results. Further, the continuous reconstruction we get from the coordinate network requires sparser samples ($n \tau = 0.2)$), overriding a  restrictive condition. 

\textbf{Experiment 3: } We use a Vanderpol system, Limit cycle attractor, and Duffing equation for this experiment \ref{ds_eqns}. We use $5000$ samples, sampled between $0-100$, to create the Hankel matrix. The results are illustrated in Fig.~\ref{fig:diffeomorphism} and Fig.~\ref{fig:diff_q}. To demonstrate the effect of noise on recovered dynamics, we add uniformly sampled noise to the Hankel matrix. In the sparse sampling scenario, we increase the sampling interval by a factor of two. As evident, coordinate networks are able to produce significantly robust results. In other words, this enables one to accurately recover the dynamics of a system with partial observations that are noisy, random, and sparse, whereas the performance of the classical method degrades in each case. 

\section{Predicting the future states}
\label{sec:prediction}

\begin{figure}[!htp]
    \centering
    \includegraphics[width=1.\columnwidth]{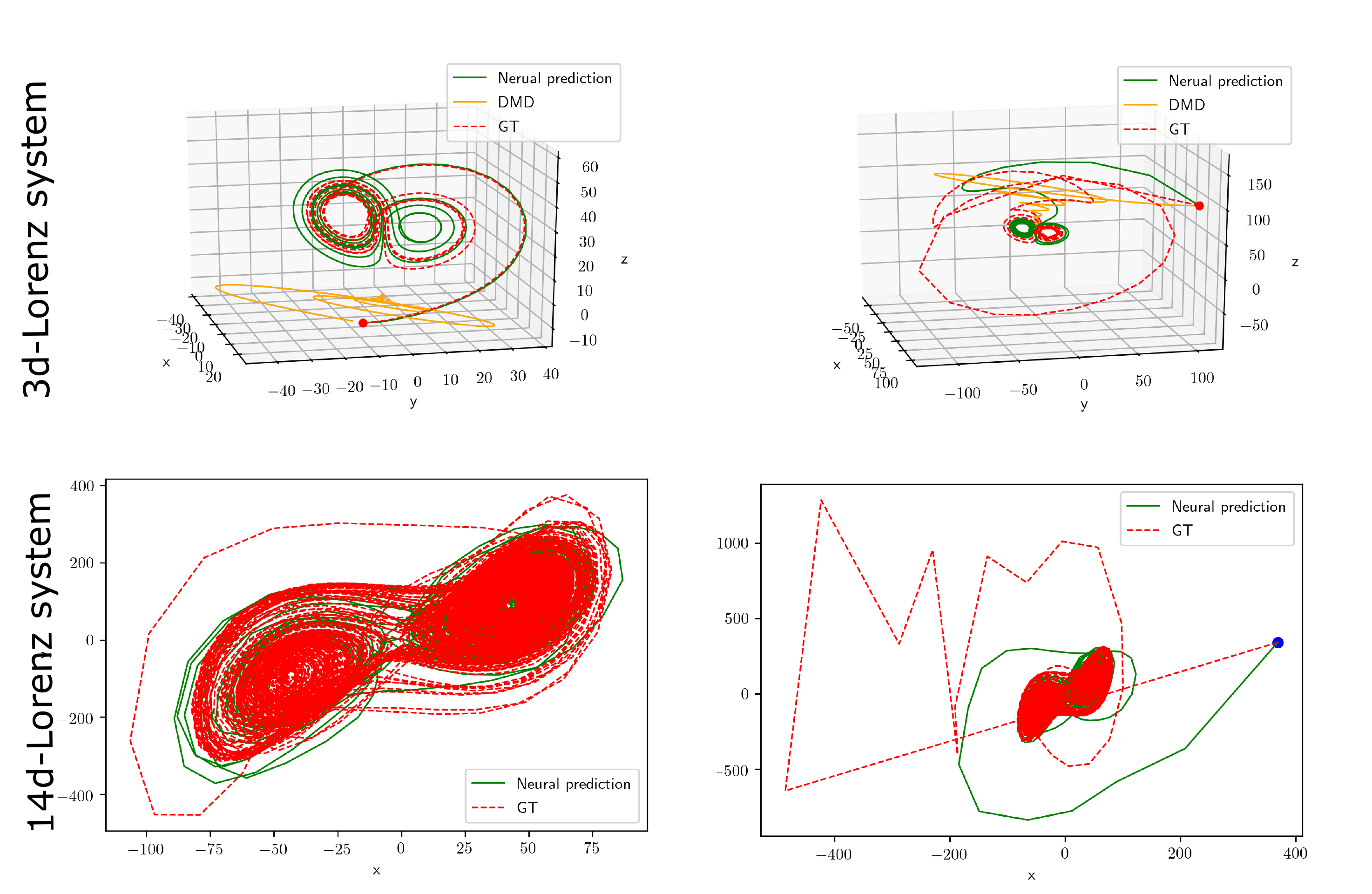}
    \vspace{-2em}
    \caption{Interpolation and extrapolation of coordinate networks. The training points are only sampled within the globes of the attractors. When an initial test point is randomly sampled within the bounds of the training data (first column), the coordinate network is able to follow the ground truth trajectories a considerable amount of time into the future, while DMD fails. Note that in the 14-dimensional case, we project the dynamics into two variables for visualization. In this case, the ratio of training samples to the Nyquist rate is extremely small ($\approx 1.81 \times 10^{-8}$), but still, the coordinate network shows satisfactory performance, demonstrating its extreme sampling efficiency. The second column illustrates the extrapolation capability of the coordinate networks. Even when the initial sampling point is far away from the training data, the network still is able to converge to the attractor.  }
    \label{fig:dmd}
\end{figure} 

The time evolution of a dynamical system can be learned by
moeling the relationship $x_{t+1} = \gamma (x_t)$, where $\gamma$ is a non-linear function, which can also be thought of as time-series forecasting. A popular tool for obtaining these forecasting dynamics is Dynamic Mode Decomposition (DMD), which has a strong connection to vector autoregressive models (VAR). Given a set of $D$ dimensional data points, DMD attempts to learn the linear transformation $A$ such that $\mathbf{x}_{t+1} \approx \mathbf{A} (\mathbf{x}_t)$. One can immediately see that this is a linear approximation of the system. $\mathbf{A}$ can be computed via finding a sparse solution to the equation $\textbf{X}_2 = \textbf{A}\textbf{X}_1$, where $\textbf{X}_1 = [\mathbf{x}(t_1), \mathbf{x}(t_2), \dots , \mathbf{x}(t_Q)]$ and $\textbf{X}_1 = [\mathbf{x}(t_2), \mathbf{x}(t_3), \dots , \mathbf{x}(t_{Q+1})]$. For a more comprehensive read on DMD, we refer the reader to \cite{tu2013dynamic}.
However, critical limitations of DMD include the linearization of the system and, most importantly, the required sampling density. For ideal reconstruction, DMD has to comply with the Nyquist sampling rate \cite{fathi2018applications}. This becomes a significant drawback when forecasting with higher-dimensional signals, as the Nyquist sampling rate increases exponentially with the number of dimensions \ref{sec:higher dimensions}. We note that this is a common drawback to \emph{any} classical forecasting system, not only DMD. We also point out that there are clever workarounds to this problem that assumes the sparsity of data in some basis, \textit{e.g.}, compressed sensing \cite{brunton2013compressive}. However, such methods demand domain knowledge of the system and often include fairly involved mathematical modeling.  

Due to the above reason, forecasting the dynamical systems using neural networks has attracted a significant amount of interest recently \cite{park2022recurrent, isamiddin2021development}. These methods can handle high dimensional data more effectively and can learn complex, non-linear functions. These methods utilize the explicit recurrent relationships built into the neural architectures to obtain predictive dynamics. In contrast, we show that by  using coordinate networks, which are simply fully connected networks that minimizes an MSE objective, it is possible to enjoy impressive predictive performances.

\textbf{Experiment 4: } For this experiment, we use a three-dimensional Lorenz system and its generalization to a 14-dimensional system. In both cases,  we create a set of $\textbf{X}_1$ and $\textbf{X}_2$ matrices using snapshots of random trajectories, starting from random initial points. For the 3-dimensional system and the 14-dimensional system, we use $20$ and $100$ trajecories, respectively. We take $800$ snapshots of each trajectory, using $0.01$ time intervals. Afterwards, we train one coordinate network per each system by feeding the columns of $\textbf{X}_1$ as inputs, and providing the corresponding columns of $\textbf{X}_2$ as labels. Thus, the network learns to predict the future state after a fixed time interval, given the current state. After training, We test the coordinate networks for two scenarios. 1) We randomly sample a point in the space,  within the bounds of the training data, and recursively use the coordinate network to advance into future states. We computed the average frequency across each axis of the 14-dimensional system to be approximately $8$, which means the Nyquist sampling rate is $8^{14} \approx 4.4 \times 10^{12}$. However, we only use $800000$ samples for training the network, where the ratio to the Nyquist rate is $1.81 \times 10^{-8}$. Nonetheless, we still managed to capture the dynamics of the system satisfactorily (Fig.~\ref{fig:dmd}). This illustrates the extreme sampling efficiency of coordinate networks. 2) Usually, neural networks are known to demonstrate weak generalizations to out-of-distribution data. However, we found that coordinate networks yield surprisingly good extrapolations; when a random point is sampled far from the training data distribution, the network still managed to converge to the attractor.








\section{Related Work}

\textbf{Data driven dynamical systems modeling.} There have been several works that have undertaken a study of data driven discovery of dynamical systems using a variety of techniques such as, nonlinear regression \cite{voss1999amplitude}, 
empirical dynamical modelling \cite{ye2015equation}, normal form methods \cite{majda2009normal}, spectral analysis \cite{giannakis2012nonlinear}, Dynamic mode decomposition (DMD) \cite{schmid2010dynamic, kutz2016dynamic}. Compressed sensing and sparse regression in a library of candidate models has also been used to identify dynamical systems
\cite{reinbold2021robust, wang2011predicting, naik2012nonlinear, brunton2016discovering}, 
\cite{tran2017exact} in the context of corrupted data. Reduced modelling techniques have also been widely used in the analysis of dynamical systems such as, proper orthogonal  decomposition (POD) \cite{holmes2012turbulence, kirby2001geometric, sirovich1987turbulence, lumley1967structure}, local and global POD methods 
\cite{schmit2004improvements, sahyoun2013local}, adaptive POD methods \cite{singer2009using, peherstorfer2015online}. DMD methods with Koopman operator theory
\cite{budivsic2012applied, mezic2013analysis} have also been used for system identification. Neural networks have been used for system identification and discovery of governing equations \cite{qin2019data, gonzalez1998identification, lecun2015deep, chen2018neural, jaeger2004harnessing, raissi2019physics, lu2021learning}. For time series forecasting, recurrent neural networks (RNNs)
\cite{bailer1998recurrent, uribarri2022dynamical} and long short-term memory networks 
(LSTM) \cite{graves2012long, wang2011predicting} are the most commonly used networks. Data driven discovery using deep neural networks was carried out in \cite{qin2019data}, and convolution neural networks have also found applications in system identification
\cite{mukhopadhyay2020learning}.

\textbf{Coordinate networks.} Coordinate networks are a recently popularized class of fully connected neural networks by the seminal work of \cite{mildenhall2021nerf}. Although, in principle, any activation function can be used with coordinate networks, traditional activations such as ReLU, Sigmoid etc. tend to suffer from spectral bias \cite{rahaman2019spectral}, hindering their ability to learn high-frequency content. As a workaround, \cite{mildenhall2021nerf} employed a positional embedding layer to project the inputs to a higher dimensional space, which allowed the network to model high frequencies more effectively. Further,  Sitzmann et al. \cite{sitzmann2020implicit} proposed a sinusoidal activation, known as SIREN, which eliminated the need for positional embedding layers. However, SIREN exhibits volatile performance against random initializations. In contrast, \cite{ramasinghe2022beyond} introduced a Gaussian activated coordinate MLP that, like SIREN, showed state-of-the-art performance on signal reconstruction. An advantage of Gaussian activated coordinate MLPs is 
that they are robust to random initialisation schemes such as Xavier Uniform, and Xavier Normal. Nonetheless, to the best of our knowledge,  so far, there has not been a discussion on the theoretical optimality of these activations for reconstructing signals. In contrast, we analyze these activations from a signal-processing perspective and propose a novel activation function.

\section{Conclusion}

In this work, we explore the efficacy of using the implicit architectural regularization of coordinate networks for modeling dynamical systems. We propose a novel activation function that is better suited for reconstructing signals, and utilize it across several problems. Notably, we use relatively shallow (4-layer) networks for our evaluations and achieve robust and improved results compared to traditional tools, without explicit regularizers.

\bibliography{example_paper}
\bibliographystyle{icml2023}

\newpage
\appendix
\onecolumn

\section{Sampling theory in higher dimensions}
\label{sec:higher dimensions}
The sampling theory --- in its original form --- is only applicable to one dimensional signals. However, it can be extended to higher dimensions in a straightforward manner. Let 
$f : \R^n \rightarrow \R$ be a function in $L^1(\R^n)$, which we think of as a higher mode signal. Let $I(\Omega_1,\ldots ,\Omega_n)$ denote an n-dimensional rectangle about the origin with side lengths $\Omega_1,\ldots ,\Omega_n$. Suppose that the Fourier transform $\widehat{f}$ vanishes identically outside of  
$I(\Omega_1,\ldots ,\Omega_n)$. Then
\begin{align*}
	f(t_1,\ldots ,t_n) = 
	\sum_{m_1=-\infty}^{\infty}\cdots 
 \sum_{m_n-\infty}^{\infty}
	f\bigg{(}\frac{m_1}{2\Omega_1},\ldots,
	\frac{m_n}{2\Omega_n} \bigg{)} \nonumber
	sinc\big{(}2\Omega_1\big{(}t_1 - \frac{n}{2\Omega_1} \big{)}\big{)}\cdots
	sinc\big{(}2\Omega_n\big{(}t_n - \frac{n}{2\Omega_n} \big{)}\big{)}. 
\end{align*}

Thus we see that sampling $f$ on the lattice defined by lengths 
$\bigg{(}\frac{1}{2\Omega_1},\ldots ,\frac{1}{2\Omega_n} \bigg{)}$ and taking
shifted sinc functions of bandwidth $2\Omega_{k}$, for $1 \leq k \leq n$, we can reconstruct the function $f$ as in the one dimensional case. Note that as in the case of the one-dimensional Nyquist-Shanon theorem, in order for perfect reconstruction one needs to sample at larger than twice the dominant frequency present in the signal. Therefore, in practise one would take the maximum of $ \Omega = \max_i\{\Omega_i\}$ and sample at a frequency of 
$2\Omega$.

\textbf{Curse of dimensionality.} While the multidimensional Nyquist-Shanon sampling theorem provides a convenient theoretical framework in which to understand signal processing problems in higher dimensions. It does not come without problems. In practise, the multidimensional sampling theorem is extremely inefficient. 

The main issue with sampling in higher dimensions is that there is an exponential increase in volumes of cubes (or rectangles/balls) associated with adding extra dimensions. To see this, imagine we had a signal 
$f : [0,1] \rightarrow \R$ whose dominant frequency was $50$-Hertz. Let us then suppose we wish to perform a reconstruction by using a sample rate of $100$-Hertz. This means that we would need to sample exactly $10^2 = 100$ points from the unit interval $[0,1]$ each spaced at a distance of $0.01$. Now, imagine that we had a 10 mode signal $g : [0,1]^{10} \rightarrow \R$ on the unit cube whose dominant frequency was also $50$-Hertz. We wish to perform a $100$-Hertz sample rate reconstruction of $g$ as we did for $f$. Now we see a problem, in this instance we would need to sample $(10^2)^10 = 10^20$ points from the 10-dimensional cube. Thus when using a sampling distance of $0.01$ we see that the 10-dimensional cube $[0,1]^10$ is $10^{18}$-times larger than the 1-dimensional cube $[0,1]$.
This exponential increase in the amount of sample points needed to reconstruct a high mode signal is referred to as the curse of dimensionality and is a mathematical consequence of the fact that volumes of many mathematical shapes grow exponentially with dimension. This makes the sampling theory of Nqyquist and Shanon some what unusable in practise for higher mode signals.

There have been other reconstruction techniques, most notable compressed sensing, that have shown far superior performance than classical sampling due to their ability to break the Nyquist limit and allow far fewer sampling points. However, such techniques have the added problem that they are memory intensive for high mode signals. As we show coordinate neural networks offer a convenient middle ground that makes them perfectly suitable for signal reconstruction in higher mode signal settings.

\section{Riesz bases and sampling.}\label{app:riesz}

In this section, we outline the definition of a Riesz basis in detail and then show which activations generate a Riesz basis and can be used as generators for reconstructing signals.
A reference for this section is \cite{unser2000sampling}.

We recall the definition of a Riesz basis. We fix a function $F$ and consider the space
\begin{equation}\label{riesz_basis_defn}
 V(F) = \bigg{\{} s(x) = \sum_{k =-\infty}^{\infty}a(k)F(x-k) : a \in l^2  \bigg{\}},
\end{equation}
where $l^2$ denotes the space of square summable sequences. This means that a function 
$s \in V$ is determined by its coefficients $c(k)$, provided it is continuously defined.

We say that the family of functions $\{F_k = F(x-k) \}_{k \in \Z}$ defines a Riesz basis if the following property holds. There exists two positive constants
$0 < A, B < \infty$ such that
\begin{equation}\label{riesz_basis_condition}
    A\cdot\vert\vert c\vert\vert^2_{l^2} \leq \bigg{\vert}\bigg{\vert}
    \sum_{k\in \Z}c(k)F_k
    \bigg{\vert}\bigg{\vert}^2 \leq 
    B\cdot\vert\vert c\vert\vert_{l_2}^2
\end{equation}
for all sequences $c(k) \in l_2$, where $l_2$ is the space of square summable sequences and $\vert\vert c\vert\vert_{l_2}^2 = \sum_{k}\vert c(k)\vert^2$ is the squared $l_2$ norm.
The lower inequality says that the basis functions must be linearly independent, which implies every signal $s \in V(F)$ is uniquely determined by its coefficients $c(k)$. The upper bound in the inequality implies that the $L^2$-norm of the signal is finite so that $V(F)$ is a subspace of $L^2$.

In order for the model $V(F)$ to be a good model for sampling it should have the ability to approximate any input function arbitrarily close by choosing an appropriate sampling step, analogous to the Nyquist criterion in classical sampling theory. This is equivalent to the \textit{partition of unity} condition
\begin{equation}\label{puc}
    \sum_{k\in \Z}F(x + k) = 1 \text{ for all } x \in \R.
\end{equation}
The reader is referred to \cite{unser2000sampling} for a discussion on how the partition of unity condition leads to a Nyquist type criterion for the space $V(F)$.

\subsection{Gaussian activation}
\label{app:gaussian reconstruct}

Shifts of a Gaussian form a Riesz basis, see \cite{hammerich2007sampling}. On the other hand the Gaussian will not satisfy \eqref{puc}. However,  
by picking the appropriate variance, and using the Poisson summation formula \cite{stein2011fourier} it can be shown that Gaussian's do approximately satisfy \eqref{PUC}, see 
\cite{hammerich2007sampling}. This shows that while shifts of a Gaussian can be used to generate a basis of functions in $L^2$, their ability to approximate a signal arbitrarily close will not be as strong as the sinc function, due to Gaussians satisfying \eqref{puc} only approximately.

\subsection{Sinusoidal activation}\label{app:sinusoid}

 Sinusoidal functions do not form a Riesz basis as they do not define functions in 
 $L^2(\R)$ and furthermore, due to the periodicity of such functions, they do not satisfy 
 \eqref{puc}. However, if we let $L^2[0,1]$ denote the space of square-integrable 
 periodic functions on $[0,1]$. 
Then the following proposition shows that shifted sine functions can be used for reconstruction of periodic signals.

\begin{prop}\label{sine reconstruct}
Let $L^2[0,1]$ denote the square-integrable space of periodic functions on $[0,1]$. Then given any signal $s \in L^2[0,1]$ we have
\begin{equation*}
    s(x) = \sum_{n= 0}^{\infty} a_nsin(nx + \frac{\pi}{2}) + b_nsin(nx)
\end{equation*}
where the equality in the above should be understood as $L^2$-convergence.
\end{prop}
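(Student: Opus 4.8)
The plan is to recognize the asserted expansion as the classical Fourier series on the circle written in a slightly disguised form. Since $\sin(nx + \tfrac{\pi}{2}) = \cos(nx)$, the claimed identity reads $s(x) = \sum_{n\ge 0} a_n\cos(nx) + b_n\sin(nx)$, with the $n=0$ term reducing to the constant $a_0$ (as $b_0\sin(0)=0$), i.e.\ it is exactly the statement that the trigonometric system $\{1\}\cup\{\cos(nx),\sin(nx)\}_{n\ge 1}$ is a complete orthogonal system in the Hilbert space $L^2$ of the circle. Consequently the proof splits into three parts: (i) the orthogonality relations, which identify $a_n,b_n$ with the usual Fourier coefficients and yield Bessel's inequality; (ii) completeness of the trigonometric system; and (iii) the standard Hilbert-space conclusion that an orthogonal family whose closed linear span is the whole space gives norm-convergent expansions. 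A small bookkeeping point I would settle at the outset: $\cos(nx),\sin(nx)$ are $2\pi$-periodic, so ``$L^2[0,1]$'' must be read as $L^2$ of $\R/2\pi\Z$ (or one rescales $x\mapsto 2\pi x$ throughout); I would state this normalization explicitly so the indexing in the proposition is unambiguous.

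For (i) I would simply record $\int_0^{2\pi}\cos(mx)\cos(nx)\,dx = \pi\delta_{mn}$ and $\int_0^{2\pi}\sin(mx)\sin(nx)\,dx = \pi\delta_{mn}$ for $m,n\ge 1$, together with $\int_0^{2\pi}\cos(mx)\sin(nx)\,dx = 0$ and $\int_0^{2\pi}1\,dx = 2\pi$, so that setting $a_0 = \tfrac{1}{2\pi}\int_0^{2\pi} s$, $a_n = \tfrac{1}{\pi}\int_0^{2\pi} s(x)\cos(nx)\,dx$ and $b_n = \tfrac{1}{\pi}\int_0^{2\pi} s(x)\sin(nx)\,dx$ makes the partial-sum error $s - S_N$ orthogonal to the span of the first $N$ modes; Bessel's inequality then gives $\sum_n(|a_n|^2+|b_n|^2)<\infty$. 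This is routine and I would not belabor it.

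The actual content is (ii). The cleanest route is Stone--Weierstrass: the complex trigonometric polynomials $\mathrm{span}\{e^{inx}:n\in\Z\}$ form a conjugation-closed subalgebra of $C(\R/2\pi\Z)$ that contains the constants and separates points of the circle, hence are uniformly dense in $C(\R/2\pi\Z)$; since continuous functions are dense in $L^2$ on this finite-measure space and the uniform norm controls the $L^2$ norm there, trigonometric polynomials are dense in $L^2$. As $\mathrm{span}\{e^{inx}\} = \mathrm{span}(\{1\}\cup\{\cos(nx),\sin(nx)\})$, this is precisely completeness of the real trigonometric system. An equally good alternative would be Fej\'er's theorem: the Ces\`aro means $\sigma_N(s)$ of the Fourier series converge uniformly to $s$ for continuous $s$, and since $\|\sigma_N(s)\|_{L^2}\le\|s\|_{L^2}$ and $C(\R/2\pi\Z)$ is $L^2$-dense, $\sigma_N(s)\to s$ in $L^2$ for all $s\in L^2$, which again forces completeness.

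Given (i) and (ii), part (iii) is immediate: after normalizing to an orthonormal family $\{e_j\}$, completeness means the $e_j$ form an orthonormal basis, so $S_N \to s$ in $L^2$ and Parseval holds; rewriting $S_N$ back in terms of $\cos(nx),\sin(nx)$ yields the claimed formula with $L^2$-convergence. I expect step (ii) --- the completeness of the trigonometric system --- to be the only nontrivial ingredient; steps (i) and (iii) are standard and short.
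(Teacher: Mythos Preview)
Your proposal is correct and follows essentially the same approach as the paper: both recognize via the angle identity $\sin(nx+\tfrac{\pi}{2})=\cos(nx)$ that the claimed expansion is just the classical sine--cosine Fourier series, and reduce the proposition to completeness of the trigonometric system in $L^2$ of the circle. The only difference is granularity: the paper simply invokes the Fourier series result by citation, whereas you unpack it (orthogonality, Bessel, completeness via Stone--Weierstrass or Fej\'er), and you also flag the $[0,1]$ versus $2\pi$-period normalization issue that the paper leaves implicit.
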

\begin{proof}
    Fourier analysis, see \cite{stein2011fourier}, shows that periodic signals can be reconstructed via a sine and cosine basis. In other words, we have
    \begin{equation*}
        s(x) = \sum_{n= 0}^{\infty} a_ncos(nx) + b_nsin(nx).
    \end{equation*}
    Using the angle formula, $sin(x + \pi/2) = cos(x)$, we see that the first term on the right of the above equality can be written as 
    $a_nsin(nx + \pi/2)$. This completes the proof.
\end{proof}

We thus see that for periodic signals one only needs two shifts of a sinusoidal function to be able to reconstruct a periodic signal. We emphasize the basis will still be infinite dimensional as each shift can have a different frequency.

\subsection{ReLU activation}\label{app:relu}

The following proposition shows that translates of ReLU cannot generate a Riesz basis.
\begin{prop}\label{relu_riesz_details}
The function $ReLU$ cannot generate a Riesz basis.
\end{prop}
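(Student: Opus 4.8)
The plan is to derive a contradiction with the upper inequality in the Riesz basis condition \eqref{riesz_basis_condition}. The crucial observation is that $\mathrm{ReLU}$ is not square integrable on $\R$: since $\mathrm{ReLU}(x) = x$ for $x \ge 0$, we have $\int_{\R} |\mathrm{ReLU}(x)|^2\,dx \ge \int_0^{\infty} x^2\,dx = +\infty$, and the same is true for every translate $F_k(x) = \mathrm{ReLU}(x-k)$. So no single $F_k$ even lies in $L^2(\R)$, which is already fatal for the upper Riesz bound.

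Concretely, I would argue by contradiction. Suppose $\{F_k\}_{k\in\Z}$ were a Riesz basis; then there is a finite $B$ with $\bigl\|\sum_{k\in\Z} c(k)F_k\bigr\|^2 \le B\,\|c\|_{l^2}^2$ for all $c\in l^2$. Testing this with $c = \delta_0$ (the sequence equal to $1$ at $k=0$ and $0$ otherwise), which has $\|c\|_{l^2}^2 = 1$, the left-hand side equals $\|F_0\|_{L^2(\R)}^2 = \|\mathrm{ReLU}\|_{L^2(\R)}^2 = +\infty$, forcing $B = +\infty$, a contradiction. Hence the upper Riesz constant cannot be finite and the translates of $\mathrm{ReLU}$ do not form a Riesz basis. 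Equivalently, the synthesis operator $c \mapsto \sum_k c(k)F_k$ is not even well defined as a bounded map $l^2 \to L^2(\R)$, since it already fails on the standard unit vectors of $l^2$.

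There is essentially no hard step here; the only point to be careful about is that this uses precisely the definition of Riesz basis adopted in \eqref{riesz_basis_condition}, in which the two-sided bound is built in, so that non-membership of a single basis element in $L^2(\R)$ already obstructs the upper estimate. If one instead worked over a bounded interval (where $\mathrm{ReLU}$ is integrable), this $L^2$ argument would no longer apply, and the appropriate obstruction would be the catastrophic failure of the partition of unity condition \eqref{puc} recorded in \cref{app:relu}; but for the statement exactly as given, non-integrability on all of $\R$ is the cleanest route, and that is the argument I would present.
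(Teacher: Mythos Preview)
Your argument is correct. Both you and the paper attack the upper Riesz bound in \eqref{riesz_basis_condition}, but by different test sequences. The paper chooses the infinite sequence $c(k)=1/k$ for $k>1$ (and $0$ otherwise), then evaluates $\sum_k c(k)\,\mathrm{ReLU}_k$ at a single point to obtain a divergent series, so the synthesis cannot sit in $L^2(\R)$. Your route is more direct: you take $c=\delta_0$, so that the upper bound would force $\|\mathrm{ReLU}\|_{L^2(\R)}^2\le B<\infty$, which is impossible because $\int_0^\infty x^2\,dx=+\infty$. This avoids any infinite-sum computation and isolates the real obstruction---that the generator itself is not square integrable. The paper's approach, by contrast, probes the collective growth of the translates and would in principle still yield an obstruction in settings where each individual $F_k$ happens to be in $L^2$ (e.g.\ on a bounded interval), a point you already anticipate in your closing remark about \eqref{puc}. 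For the proposition exactly as stated over $\R$, your argument is the cleaner one.
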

\begin{proof}
    We have to show that the set $V(ReLU)$, defined as in \eqref{riesz_basis_defn}, fails to satisfy \eqref{riesz_basis_condition}. We will show that it fails to satisfy the upper bound given in \eqref{riesz_basis_condition}. We consider the sequence $c(k)$ defined as follows
    \begin{equation*}
        c(k) = 
        \begin{cases}
            0 & \text{if}\ k \leq 1 \\
            1/k & \text{if}\ k > 1.
        \end{cases}
    \end{equation*}
   Then it is clear that $c(k) \in l_2$ with 
   $\vert\vert c(k)\vert\vert_{l_2}^2 = 1$. Taking $x = 0$, we see that 
   \begin{equation*}
       ReLU_k(0) = ReLU(-k) =
       \begin{cases}
           0 & \text{if}\ k \leq 0 \\
           k & \text{if}\ k > 0.
       \end{cases}
   \end{equation*}
   In particular, 
   \begin{align*}
       \sum_{k \in \Z}c(k)ReLU_k(0) = \sum_{k > 1}\frac{k}{k} 
       = \sum_{k>1}1 
       = \infty.
    \end{align*}
    Thus we see that the upper bound in \eqref{riesz_basis_condition} cannot hold and the statement of the proposition has been proved.
\end{proof}

Furthermore, the ReLU function fails to satisfy the partition of unity condition 
\eqref{puc} as the following proposition shows.
\begin{prop}
    The ReLU function fails to satisfy the partition of unity condition \eqref{puc}.
\end{prop}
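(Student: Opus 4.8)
<br>

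\textbf{Proof proposal.} The plan is to show that for \emph{every} fixed $x\in\R$ the series $\sum_{k\in\Z}\mathrm{ReLU}(x+k)$ diverges to $+\infty$, so it can never equal the constant $1$; this is in fact a stronger (``catastrophic'') failure than merely being a non-constant finite sum. First I would fix $x\in\R$ and split the sum over $k\in\Z$ according to the sign of $x+k$. Since $\mathrm{ReLU}(t)=\max(t,0)$, the terms with $x+k\le 0$ contribute nothing, so it suffices to bound the tail $\sum_{k>-x}\mathrm{ReLU}(x+k)=\sum_{k>-x}(x+k)$ from below. For all integers $k$ with $k\ge \lceil -x\rceil+1$ we have $x+k\ge 1$, and more to the point $x+k\to\infty$ as $k\to\infty$, so the general term does not tend to $0$; hence the partial sums are unbounded and $\sum_{k\in\Z}\mathrm{ReLU}(x+k)=+\infty$.

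Therefore the left-hand side of \eqref{puc} is not even finite for any $x\in\R$, so in particular it is not identically equal to $1$, which proves the proposition. There is no real obstacle here: the argument is just the observation that a nonnegative, eventually linearly growing function cannot have summable integer translates, so the only ``step'' requiring any care is making explicit that the divergence holds uniformly for every $x$ (and not merely on a particular residue class), which follows immediately since the threshold index $k>-x$ depends on $x$ only through a single shift.
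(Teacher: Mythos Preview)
Your proposal is correct and follows essentially the same route as the paper: fix $x$, drop the nonpositive summands, and observe that $\sum_{k>-x}(x+k)$ diverges because the general term does not tend to zero. The paper's proof is just the terser one-line version of exactly this computation.
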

\begin{proof}
    Fix $x \in \R$ and osberve that 
    \begin{align*}
        \sum_{k \in \Z}ReLU(x + k) = \sum_{k \geq -x}ReLU(x + k)
        = \sum_{k \geq -x}(x + k) 
        = \infty.
    \end{align*}
\end{proof}
This result also explains why coordinate networks employing $ReLU$ activations tend to perform poorly with high frequency signal reconstruction. As shown by proposition
\ref{relu_riesz_details} shifted copies of $ReLU$ do not form a Riesz basis and the extreme failure of condition \eqref{puc} means they will only be able to produce low fidelity reconstructions. The key problem with a ReLU activation is that it cannot be modulated primarily because it satisfies the homogeniety property
\begin{equation}\label{relu_homog_cond}
    ReLU(\omega x) = \omega ReLU(x) \text{ for any}\ \omega > 0.
\end{equation}
Thus, shifts and scales of ReLU's will give poor reconstruction for high frequency signals. We also note that as a ReLU function is not periodic, and cannot be modulated as shown in 
\eqref{relu_homog_cond}, it will not be a good reconstructor for periodic signals.

\section{Lipschitz constant of coordinate networks}

\begin{theorem}\label{layer_lipshitz_sinc_proof}
Let $f_L$ denote a neural network emplying a sinc non-linearity, 
$sinc(\omega x) = \frac{sin(\omega x)}{\omega x}$. Then the Lipshitz constant of $f_k$, for $1 \leq k \leq L$ increases as 
$\omega$ increases. In other words, increasing $\omega$ increases the kth-layers Lipshitz constant.
\end{theorem}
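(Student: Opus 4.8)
The plan is to compute the Lipschitz constant of each layer map $x \mapsto \phi(W_l x + b_l)$ where $\phi(x) = \operatorname{sinc}(\omega x) = \frac{\sin(\omega x)}{\omega x}$, and then propagate this through the composition. Since $f_k$ is a composition of the affine maps $z \mapsto W_j z + b_j$ with the coordinate-wise nonlinearity $\phi$, the Lipschitz constant of $f_k$ is bounded by the product $\prod_{j=1}^{k} \|W_j\|_{\mathrm{op}} \cdot L_\phi^{\,k-1}$ where $L_\phi$ is the Lipschitz constant of the scalar function $\phi$ (the biases do not affect Lipschitz constants, and the final linear read-off only contributes $\|W_k\|_{\mathrm{op}}$ for the top layer). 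So the whole statement reduces to showing that $L_\phi$, the Lipschitz constant of $x \mapsto \operatorname{sinc}(\omega x)$, is an increasing function of $\omega > 0$.

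First I would establish that $L_\phi = \sup_{x \in \R} |\tfrac{d}{dx}\operatorname{sinc}(\omega x)|$. A direct computation gives $\tfrac{d}{dx}\operatorname{sinc}(\omega x) = \omega\, g(\omega x)$ where $g(u) = \tfrac{d}{du}\operatorname{sinc}(u) = \frac{u\cos u - \sin u}{u^2}$. Hence $L_\phi = \omega \sup_{u}|g(u)| = \omega \cdot C$, where $C = \sup_u |g(u)|$ is a finite absolute constant (one checks $g$ is bounded: near $u=0$, $g(u) \to 0$, and for large $|u|$, $|g(u)| \le \frac{1}{|u|} + \frac{1}{u^2} \to 0$, and $g$ is continuous, so the sup is attained and finite — numerically $C \approx 0.4345$, attained at the first nonzero root of $u^2 \sin u + 2u\cos u - 2\sin u = 0$, but the exact value is irrelevant). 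The key point is the clean scaling: $L_\phi(\omega) = C\omega$, which is strictly increasing in $\omega$.

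Then I would assemble the bound: writing $f_k = W_k(\phi \circ A_{k-1}) \circ \cdots$, we get $\operatorname{Lip}(f_k) \le \|W_k\|_{\mathrm{op}} \prod_{j=1}^{k-1}\big(L_\phi \|W_j\|_{\mathrm{op}}\big) = (C\omega)^{k-1}\prod_{j=1}^{k}\|W_j\|_{\mathrm{op}}$, an explicit expression that is manifestly monotone increasing in $\omega$ for fixed weights (and indeed $k \ge 1$; for $k=1$ the network is affine and its Lipschitz constant $\|W_1\|_{\mathrm{op}}$ is $\omega$-independent, so the statement is read as non-decreasing, or one restricts to $k \ge 2$). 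I would phrase the conclusion as: since each factor $C\omega$ is increasing in $\omega$ and the weights are held fixed, the upper bound — which is the natural layer-wise Lipschitz estimate — increases with $\omega$.

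The main obstacle is interpretive rather than computational: the Lipschitz constant of a composition is only bounded above by the product of the individual Lipschitz constants, so strictly speaking one shows the standard product bound grows with $\omega$ rather than that the true Lipschitz constant does; I would note that for generic/non-degenerate weight configurations this bound is the operative quantity and the sinc factor is the only $\omega$-dependent piece, so tuning $\omega$ directly scales the controlling estimate. The only genuine lemma to verify carefully is that $\sup_u |g(u)| < \infty$ with $g(u) = (u\cos u - \sin u)/u^2$, which follows from the boundedness and decay estimates above together with continuity (including the removable singularity at $u = 0$).
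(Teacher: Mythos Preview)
Your approach is essentially the same as the paper's: both compute the derivative of the sinc activation as $\phi'(x) = \omega \cdot \tilde\phi(\omega x)$ and propagate the extracted factor of $\omega$ through the layer composition. The paper does this via the explicit Jacobian chain-rule product $J(f_k)(x) = \prod_{l} D_{k-l}(x)\,W_{k-l}$, writing each diagonal derivative block as $D_j = \omega\,\tilde D_j$ to obtain $J(f_k) = \omega^{k}\prod_l \tilde D_{k-l} W_{k-l}$, while you use the equivalent submultiplicative Lipschitz bound on compositions. Two minor discrepancies worth noting: the paper's exponent is $\omega^{k}$ rather than your $(C\omega)^{k-1}$, since in their convention $f_k$ for $k<L$ ends with an activation (so there are $k$ nonlinearities, not $k-1$); and you are more careful than the paper in flagging that the product is only an \emph{upper bound} on the true Lipschitz constant rather than the constant itself---a caveat the paper's argument elides.
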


\begin{proof}
    Since our activation function is smooth, the Lipshitz constant can be computed via the operator norm of the Jacobian
\begin{equation}\label{jac1}
	\vert\vert f_k\vert\vert_{Lip} = \sup_{x \in \R^d}\vert\vert J(f_k)(x)\vert\vert_{op}.
\end{equation}
The Jacobian $J(f_k)$ is given by the formula, see \cite{nguyen2020global} for derivation,
\begin{equation}
	J(f_k)(x) = \prod_{l=0}^{k-1}D_{k-l}(x)W_{k-l},
\end{equation}
where $D_{k-l}$ denotes the diagonal $n_{k-l}\times n_{k-l}$ matrix  with
entries $\phi'(g_{k-l,j}(x))$ for $1 \leq j \leq n_{k-l}$, and $g_{k-l, j}(x)$ is the pre-activation neuron. Note that $\phi(x) = sinc(\omega x)$ is our activation function, and $\phi'$ denotes the derivative.

We observe that in the case that $\phi(x) = sinc(\omega x) = \frac{sin(\omega x)}{\omega x}$, 
we have that
$\phi'(x) = \omega\frac{cos(\omega x)}{\omega x} - 
\omega\frac{sin(\omega x)}{(\omega x)^2}$. Thus the matrix 
$D_{k-l}$ can be expressed as $\omega\widetilde{D}_{k-l}$, where
$\widetilde{D}_{k-l}$ denotes the diagonal matrix with diagonal entries given
by $\widetilde{\phi}(g_{k-l,j}(x))$, where 
$\widetilde{\phi}(x) = \frac{cos(\omega x)}{\omega x} - 
\frac{sin(\omega x)}{(\omega x)^2}$. In particular, this implies the Jacobian can be written as 
\begin{equation*}
	J(f_k)(x) = 
	\omega^{k}\prod_{l=0}^{k-1}\widetilde{D}_{k-l}(x)W_{k-l}.
\end{equation*}
Therefore, it follows that by changing $\omega$, the operator norm of the Jacobian will scale by a factor of $\omega^k$ and hence the Lipshitz constant will scale by exactly that factor. In particular, by increasing $\omega$ the Lipshitz constant increases by a factor of $\omega^k$.
\end{proof}

\begin{theorem}\label{layer_lipshitz_gauss}
Let $f_L$ denote a neural network with Gaussian non-linearity, 
$e^{-x^2/\omega^2}$. Then the Lipshitz constant of $f_l$ increases as 
$\omega$ decreases. In other words, decreasing $\omega$ increases the lth-layers 
Lipshitz constant.
\end{theorem}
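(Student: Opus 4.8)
The plan is to mirror the proof of Theorem~\ref{layer_lipshitz_sinc_proof} almost line for line, substituting the Gaussian derivative computation for the sinc one. First I would use smoothness of $\phi(x)=e^{-x^2/\omega^2}$ to write the layer-$k$ Lipschitz constant as the supremum of the operator norm of the Jacobian, exactly as in \eqref{jac1}, and recall the factorisation $J(f_k)(x)=\prod_{l=0}^{k-1}D_{k-l}(x)W_{k-l}$, where $D_{k-l}(x)$ is the diagonal matrix whose $j$-th entry is $\phi'(g_{k-l,j}(x))$ and $g_{k-l,j}$ is the corresponding pre-activation.

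Second, I would compute $\phi'(x)=-\tfrac{2x}{\omega^{2}}e^{-x^{2}/\omega^{2}}$ and observe that, setting $u=x/\omega$, this equals $\tfrac{1}{\omega}\chi(x/\omega)$ with $\chi(u)=-2u\,e^{-u^{2}}$. This is the Gaussian analogue of writing $D_{k-l}=\omega\widetilde{D}_{k-l}$ in the sinc proof: the whole $\omega$-dependence of the \emph{magnitude} scale factors out as $1/\omega$, leaving a universal residual profile $\chi$. A one-line calculus step gives $\|\chi\|_{\infty}=\sup_{u}2|u|e^{-u^{2}}=\sqrt{2/e}$, attained at $u=1/\sqrt{2}$. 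Hence $\|D_{k-l}(x)\|_{op}=\max_{j}|\phi'(g_{k-l,j}(x))|\le \tfrac{1}{\omega}\sqrt{2/e}$ uniformly in $x$.

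Third, I would apply submultiplicativity of the operator norm to the product and take the supremum over $x\in\R^{d}$, obtaining
\[
\|f_k\|_{Lip}\;\le\;\Big(\tfrac{1}{\omega}\sqrt{2/e}\Big)^{k}\prod_{l=0}^{k-1}\|W_{k-l}\|_{op},
\]
whose right-hand side behaves like $\omega^{-k}$ and is therefore strictly decreasing in $\omega$. Thus decreasing $\omega$ increases this bound on the $k$-th layer's Lipschitz constant, which is the claimed monotonicity.

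The one place needing care beyond the sinc case — the main obstacle — is justifying that this is genuinely a statement about the Lipschitz constant rather than merely about an upper bound. As in Theorem~\ref{layer_lipshitz_sinc_proof}, the clean $\omega^{-k}$ scaling hinges on taking the supremum of $\|J(f_k)(x)\|_{op}$ over \emph{all} of $\R^{d}$: for a fixed $x$, varying $\omega$ also drives the arguments $g_{k-l,j}(x)/\omega$ across the profile of $\chi$, so the pointwise Jacobian does not scale exactly. One therefore argues, precisely as in the sinc proof, that because $\sup_{u}|\chi(u)|$ is independent of $\omega$ and the pre-activation maps sweep out unbounded sets, the worst-case operator norm scales like $\omega^{-k}$ up to the fixed weight factors, which makes the $\omega$-monotonicity rigorous. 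I would remark that an exactly matching lower bound would in addition require picking weights and biases so that some pre-activation sits near the maximiser $u=1/\sqrt{2}$, but for the qualitative statement of the theorem the operator-norm estimate above is enough.
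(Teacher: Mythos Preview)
Your argument follows the same Jacobian-factorisation template as the paper's own proof. The one substantive difference is in how the $\omega$-dependence is pulled out of $\phi'(x)=-\tfrac{2x}{\omega^{2}}e^{-x^{2}/\omega^{2}}$: the paper extracts $1/\omega^{2}$ per layer, writing $D_{k-l}=\tfrac{1}{\omega^{2}}\widetilde{D}_{k-l}$ with residual $\widetilde{\phi}(x)=-2x\,e^{-x^{2}/\omega^{2}}$, and concludes an overall $\omega^{-2k}$ scaling; you instead rescale the argument via $u=x/\omega$ to pull out only $1/\omega$, obtaining $\omega^{-k}$. Your factorisation is the cleaner one, because your residual profile $\chi(u)=-2u\,e^{-u^{2}}$ is genuinely $\omega$-free, whereas the paper's $\widetilde{\phi}$ still depends on $\omega$ --- indeed $\sup_{x}|\widetilde{\phi}(x)|=\omega\sqrt{2/e}$, so a hidden factor of $\omega$ sits inside each $\widetilde{D}_{k-l}$ and the true scaling of the supremum is your $\omega^{-k}$, not $\omega^{-2k}$. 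You are also more explicit than the paper about the distinction between an upper bound on $\|J(f_k)\|_{op}$ and the Lipschitz constant itself. Both routes deliver the same qualitative conclusion the theorem asks for: the $k$-th layer Lipschitz constant increases as $\omega$ decreases.
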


\begin{proof}
Since our activation function is smooth, the Lipshitz constant can be computed via the operator norm of the Jacobian
\begin{equation}
	\vert\vert f_k\vert\vert_{Lip} = \sup_{x \in \R^d}\vert\vert J(f_k)(x)\vert\vert_{op}.
\end{equation}
The Jacobian $J(f_k)$ is given by the formula
\begin{equation}
	J(f_k)(x) = \prod_{l=0}^{k-1}D_{k-l}(x)W_{k-l},
\end{equation}
where $D_{k-l}$ denotes the diagonal $n_{k-l}\times n_{k-l}$ matrix  with
entries $\phi'(g_{k-l,j}(x))$ for $1 \leq j \leq n_{k-l}$, and $g_{k-l, j}(x)$ is the pre-activation neuron.

We observe that in the case that $\phi(x) = e^{-x^2/\omega^2}$, we have that
$\phi'(x) = \frac{-2x}{\omega^2}e^{-x^2/\omega^2}$. Thus the matrix 
$D_{k-l}$ can be expressed as $\frac{1}{\omega^2}\widetilde{D}_{k-l}$, where
$\widetilde{D}_{k-l}$ denotes the diagonal matrix with diagonal entries given
by $\widetilde{\phi}(g_{k-l,j}(x))$, where 
$\widetilde{\phi}(x) = -2xe^{-x^2/\omega^2}$. In particular, this implies the Jacobian can be written as 
\begin{equation*}
	J(f_k)(x) = 
	\frac{1}{\omega^{2k}}\prod_{l=0}^{k-1}\widetilde{D}_{k-l}(x)W_{k-l}.
\end{equation*}
Therefore, it follows that by changing $\omega$, the operator norm of the Jacobian will scale by a factor of $\frac{1}{\omega^{2k}}$ and hence the Lipshitz constant will scale by exactly that factor. In particular, by increasing $\omega$ the Lipshitz constant increases by a factor of $\frac{1}{\omega^{2k}}$.
\end{proof}

We now show how modulating the bandwidth always the stable rank of the feature maps $F_l$ to increase. We will carry out the following analysis for a Sinc activated coordinate network. Similar results also hold for a Gaussian or Sine/Cos activated network.


\begin{theorem}
\label{eq:stable rank}
Let $f_L$ denote a coordinate neural network with activation 
$sinc(\omega x) = \frac{sin(\omega x)}{\omega x}$. Furthermore, fix a training data set $X = \{x_i\}$ sampled from a fixed training distribution $\mathcal{P}$.
Then increasing $\omega$ leads to on average an increase in the stable rank of the feature map $F_l$.
\end{theorem}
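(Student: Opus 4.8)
The plan is to work with the \emph{Gram matrix} of the layer-$l$ features rather than with $F_l$ directly. Writing $v_i := f_l(x_i) \in \R^{n_l}$ for the feature vector of the $i$-th sample and $G := F_l F_l^\top$ for the $N \times N$ Gram matrix with entries $G_{ij} = \langle v_i, v_j\rangle$, the stable rank of $F_l$ equals $\operatorname{Tr}(G)/\|G\|_{op}$; since the nonzero spectra of $F_l F_l^\top$ and $F_l^\top F_l$ coincide, this is exactly $\|F_l\|_F^2/\|F_l\|_{op}^2$. Passing to the correlation matrix $\widetilde{G} := D^{-1/2} G D^{-1/2}$, where $D = \operatorname{diag}(\|v_1\|^2,\ldots,\|v_N\|^2)$, one obtains
\begin{equation*}
  \operatorname{srank}(F_l) \;\ge\; \frac{\sum_i \|v_i\|^2}{\big(\max_i \|v_i\|^2\big)\,\|\widetilde{G}\|_{op}},
\end{equation*}
so that, whenever the feature norms are comparable across samples (which holds in expectation over the network weights, the neurons being identically distributed), $\operatorname{srank}(F_l)$ is bounded below by roughly $N/\|\widetilde{G}\|_{op}$. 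Note that stable rank is scale-invariant, so the $\omega^l$ growth of the Lipschitz constant from \cref{layer_lipshitz_sinc_proof} is irrelevant here; what matters is the \emph{shape} of the spectrum, i.e.\ how close $\widetilde{G}$ is to the identity.

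The heart of the argument is then to show that increasing $\omega$ drives the off-diagonal correlations $\widetilde{G}_{ij}$ ($i\ne j$) toward $0$ on average, pushing $\|\widetilde{G}\|_{op}$ toward $1$ and hence $\operatorname{srank}(F_l)$ toward its maximal value $N$. For the first hidden layer this is a Riemann--Lebesgue / oscillatory-averaging computation: with $a_{im} = \langle w_m, x_i\rangle + b_m$ the pre-activations, $\langle v_i,v_j\rangle = \sum_m \operatorname{sinc}(\omega a_{im})\operatorname{sinc}(\omega a_{jm})$ while $\|v_i\|^2 = \sum_m \operatorname{sinc}^2(\omega a_{im})$, and for distinct inputs $x_i \ne x_j$ the cross terms oscillate in $(w_m,b_m)$ and average to a quantity of size $O(1/\omega)$ relative to the squared terms, so the expected normalized correlation tends to $0$. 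For deeper layers one proceeds by induction: the layer-$l$ pre-activations are linear images of the (already decorrelated, $O(1)$-bounded, and not $\omega$-contracted) layer-$(l-1)$ features, so the same oscillatory estimate applies. At the opposite extreme, expanding $\operatorname{sinc}(\omega z) = 1 - \omega^2 z^2/6 + O(\omega^4)$ shows that for small $\omega$ each $F_l$ is a rank-one (all-ones) matrix plus an $O(\omega^2)$ correction, so $\operatorname{srank}(F_l)\to 1$ as $\omega\to 0$; interpolating between the two regimes yields the asserted on-average increase.

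The step I expect to be \textbf{the main obstacle} is controlling $\|\widetilde{G}\|_{op}$ --- the \emph{largest} eigenvalue --- rather than merely the typical off-diagonal magnitude: a single surviving rank-one bump would break the conclusion even when most correlations are tiny. Two routes are available: a Gershgorin bound, valid once $\omega$ is large enough that $\sum_{j\ne i}\mathbb{E}|\widetilde{G}_{ij}| < 1$ for each $i$ (enough to make the claim precise for fixed $N$ in the large-$\omega$ regime), or a matrix-concentration argument exploiting the approximate independence of the off-diagonal entries to bound $\mathbb{E}\|\widetilde{G}-I\|_{op}$. Either way the statement should be read as: for every $\epsilon>0$ there is $\omega_0$ with $\mathbb{E}_{X\sim\mathcal{P}^N}[\operatorname{srank}(F_l)] \ge N-\epsilon$ for $\omega \ge \omega_0$, matched against $\operatorname{srank}(F_l)\to 1$ as $\omega\to 0$; a literal ``derivative nonnegative for every $\omega$'' is neither expected nor needed. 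Keeping the induction over $l$ uniform in $\omega$ --- certifying that deeper pre-activations do not collapse as $\omega$ varies --- is the other point requiring care.
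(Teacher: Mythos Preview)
Your proposal is a reasonable plan and is \emph{genuinely different} from the paper's own argument. The paper does not go through the correlation matrix $\widetilde{G}$, Gershgorin/matrix-concentration, or any Riemann--Lebesgue oscillatory estimate. Instead it leverages \cref{layer_lipshitz_sinc_proof} directly: since the layer-$l$ Lipschitz constant scales like $\omega^l$, the quantity $\max_{x\ne y}\|f_l(x)-f_l(y)\|/\|x-y\|$ increases with $\omega$; expanding the numerator via the polarization identity $\|f_l(x)-f_l(y)\|^2 = \|f_l(x)\|^2 + \|f_l(y)\|^2 - 2\langle f_l(x),f_l(y)\rangle$ and using $|\operatorname{sinc}|\le 1$ to bound the first two terms by $2n_l$, the paper concludes that the inner products $\langle f_l(x),f_l(y)\rangle$ must decrease ``on average,'' hence the rows of $F_l$ become more orthogonal and the stable rank rises. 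That is the entire proof; there is no attempt to control $\|\widetilde{G}\|_{op}$, no small-$\omega$ expansion, and no induction over depth.

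It is worth noting that you explicitly dismiss the Lipschitz scaling as ``irrelevant'' because stable rank is scale-invariant, whereas the paper makes precisely that scaling the engine of its argument. Your objection is well-taken at the level of rigor you are aiming for: a growing Lipschitz constant is a supremum over \emph{all} inputs and does not by itself force the finitely many pairwise differences $\|f_l(x_i)-f_l(x_j)\|$ on the fixed training set to grow (indeed they are uniformly bounded since $|\operatorname{sinc}|\le 1$). Your oscillatory-averaging route sidesteps this and yields the sharper asymptotic statement $\operatorname{srank}(F_l)\to N$ as $\omega\to\infty$ and $\to 1$ as $\omega\to 0$, at the cost of the operator-norm control you correctly flag as the main obstacle. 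The paper's argument is shorter and more heuristic; yours is longer but closer to something that could be made fully rigorous.
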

\begin{proof}
The rows of the feature map $F_l$ are given by the outputs of the the lth-layer, 
$f_l$. By proposition \ref{layer_lipshitz_sinc} 
we have that the quantity
\begin{equation}
	\max_{x, y \in X, x\neq y}
	\frac{\vert\vert f_l(x) - f_l(y)\vert\vert}{\vert\vert x-y\vert\vert}
\end{equation} 
increases when $\omega$ increases. Expanding the norm on the numerator we find
\begin{align}\label{parallelogram_expansion}
\frac{\vert\vert f_l(x) - f_l(y)\vert\vert^2}{\vert\vert x-y\vert\vert^2} 
=
\frac{\vert\vert f_l(x)\vert\vert^2 + \vert\vert f_l(y)\vert\vert^2 - 
2\langle f_l(x), f_l(y)\rangle}{\vert\vert x-y\vert\vert^2}. 
\end{align}

We observe that since $\vert sinc(\omega x)\vert \leq 1$, the sum 
$\vert\vert f_l(x)\vert\vert^2 + \vert\vert f_l(y)\vert\vert^2$ is bounded above
by $2n_l$. 
We then observe that as $\omega$ increases the maximum of the right side of 
\eqref{parallelogram_expansion} increases on average, which can only be possible if the inner product is decreasing on average. Thus we see that as  
$\omega$ increases, the rows of the feature matrix $F_l$ are tending to be orthogonal. In other words, the rows of the feature matrix are becoming more spread out. This implies that the energy of the singular values of $F_l$ are more spread out, hence the quantity $\sum_{i}\sigma_i^2$ increases as we increase
$\omega$. By definition of the stable rank, it follows that the stable rank increases.

\end{proof}



\begin{prop}
Let $f_L$ denote a coordinate neural network with activation 
$e^{-x^2/\omega^2}$. Furthermore, fix a training data set $X = \{x_i\}$ sampled from a fixed training distribution $\mathcal{P}$.
Then decreasing $\omega$ leads to on average an increase in the stable rank of the feature map $F_l$.
\end{prop}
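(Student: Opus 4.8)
The plan is to run the argument of Theorem~\ref{eq:stable rank} essentially verbatim, with the Gaussian Lipschitz estimate of Theorem~\ref{layer_lipshitz_gauss} playing the role of its sinc counterpart. Recall that the rows of the feature map $F_l$ are the layer outputs $f_l(x_i)$. By Theorem~\ref{layer_lipshitz_gauss} the Lipschitz constant of $f_l$ scales by $1/\omega^{2l}$, so the quantity
\begin{equation*}
\max_{x,y\in X,\ x\neq y}\frac{\vert\vert f_l(x)-f_l(y)\vert\vert}{\vert\vert x-y\vert\vert}
\end{equation*}
increases as $\omega$ decreases. Expanding the squared numerator by the parallelogram identity gives
\begin{equation*}
\frac{\vert\vert f_l(x)-f_l(y)\vert\vert^2}{\vert\vert x-y\vert\vert^2}=\frac{\vert\vert f_l(x)\vert\vert^2+\vert\vert f_l(y)\vert\vert^2-2\langle f_l(x),f_l(y)\rangle}{\vert\vert x-y\vert\vert^2},
\end{equation*}
and the uniform bound $\vert e^{-x^2/\omega^2}\vert\le 1$ shows $\vert\vert f_l(x)\vert\vert^2+\vert\vert f_l(y)\vert\vert^2\le 2n_l$, independently of $\omega$.

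The next step is the averaging argument. Since the displayed ratio grows as $\omega$ decreases while the sum-of-squared-norms term stays capped at $2n_l$, the increase can only be absorbed by the inner product $\langle f_l(x),f_l(y)\rangle$, which must therefore decrease on average over pairs $x,y$ sampled from $\mathcal{P}$. Thus the rows of $F_l$ become, on average, more nearly orthogonal as $\omega$ decreases; equivalently the mass of the Gram matrix $F_lF_l^T$ spreads out across its eigenvalues, so the Frobenius energy $\sum_i\sigma_i^2=\vert\vert F_l\vert\vert_F^2$ grows relative to $\sigma_{\max}^2=\vert\vert F_l\vert\vert_{op}^2$. By the definition of stable rank, $\vert\vert F_l\vert\vert_F^2/\vert\vert F_l\vert\vert_{op}^2$, this is precisely the claimed increase.

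The main obstacle, exactly as in Theorem~\ref{eq:stable rank}, is making ``on average'' precise: the Lipschitz estimate only constrains the extremal pair $x,y$, whereas the conclusion concerns an expectation over $\mathcal{P}$, and one must also rule out $\vert\vert F_l\vert\vert_{op}$ growing as fast as $\vert\vert F_l\vert\vert_F$. I would treat this at the same heuristic level as the sinc case: argue that the extremal pair reflects the bulk behaviour of pairs under $\mathcal{P}$, and note that $\vert\vert F_l\vert\vert_{op}\le\sqrt{n_l\vert X\vert}$ uniformly in $\omega$ while the Frobenius energy is driven upward by the spreading of the rows. A fully rigorous version would instead require a concentration statement on the distribution of pairwise inner products, which is outside the present scope.
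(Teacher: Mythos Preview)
Your proposal is correct and follows essentially the same approach as the paper's own proof: invoke the Gaussian Lipschitz result (Theorem~\ref{layer_lipshitz_gauss}), expand the difference quotient via the inner-product identity, bound the sum of squared norms by $2n_l$ using $\vert e^{-x^2/\omega^2}\vert\le 1$, and conclude that inner products must decrease on average so the singular-value energy spreads out. If anything, you are more explicit than the paper about the definition of stable rank and more candid about the heuristic nature of the ``on average'' step and the need to control $\vert\vert F_l\vert\vert_{op}$; the paper's proof leaves these points implicit.
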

\begin{proof}
The rows of the feature map $F_l$ are given by the outputs of the the lth-layer, 
$f_l$. By proposition \ref{layer_lipshitz_gauss} 
we have that the quantity
\begin{equation}
	\max_{x, y \in X, x\neq y}
	\frac{\vert\vert f_l(x) - f_l(y)\vert\vert}{\vert\vert x-y\vert\vert}
\end{equation} 
increases when $\omega$ decrease. Expanding the norm on the numerator we find
\begin{equation}\label{parallelogram_expansion_app}
\frac{\vert\vert f_l(x) - f_l(y)\vert\vert^2}{\vert\vert x-y\vert\vert^2} =
\frac{\vert\vert f_l(x)\vert\vert^2 + \vert\vert f_l(y)\vert\vert^2 - 
2\langle f_l(x), f_l(y)\rangle}{\vert\vert x-y\vert\vert^2}.
\end{equation}
We observe that since $\vert e^{-x^2/\omega^2}\vert \leq 1$, the sum 
$\vert\vert f_l(x)\vert\vert^2 + \vert\vert f_l(y)\vert\vert^2$ is bounded above
by $2n_l$. 
We then observe that as $\omega$ decreases the maximum of the right side of 
\eqref{parallelogram_expansion_app} increases on average, which can only be possible if the inner product is decreasing on average. Thus we see that as  
$\omega$ decreases, the rows of the feature matrix $F_l$ are tending to be orthogonal. In other words, the rows of the feature matrix are becoming more spread out. This implies that the energy of the singular values of $F_l$ are more spread out, hence the quantity $\sum_{i}\sigma_i^2$ increases as we decrease 
$\omega$. By definition of the stable rank, it follows that the stable rank increases.

\end{proof}

\begin{figure}
    \centering
    \includegraphics[width=1.\columnwidth]{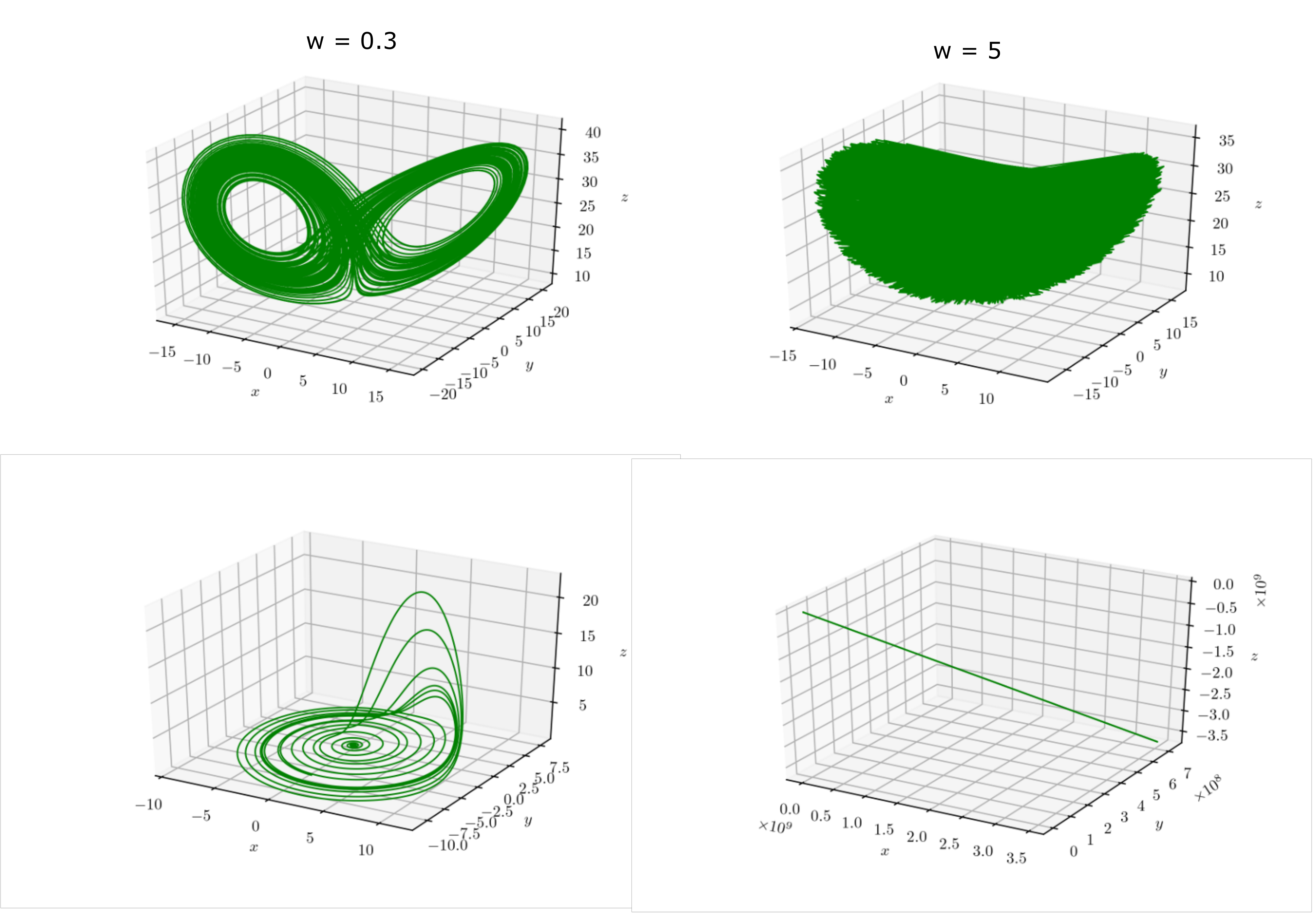}
    \caption{The top row and the bottom row depicts the SINDy reconstructions obtained for the Lorenz system and the Rossler system, respectively, using coordinate networks. As $\omega$ is increased in the sinc function, the coordinate network allows more higher frequencies to be captured, resulting in noisy reconstructions. }
    \label{fig:omega}
\end{figure}

\section{Comparison of activations}

We compare the performance of activation functions in improving SINDy. Fig.\ref{fig:sindyq} shows the results. As depicted, the sinc activation achieves the best results. 

\begin{figure}
    \centering
    \includegraphics[width=1.\columnwidth]{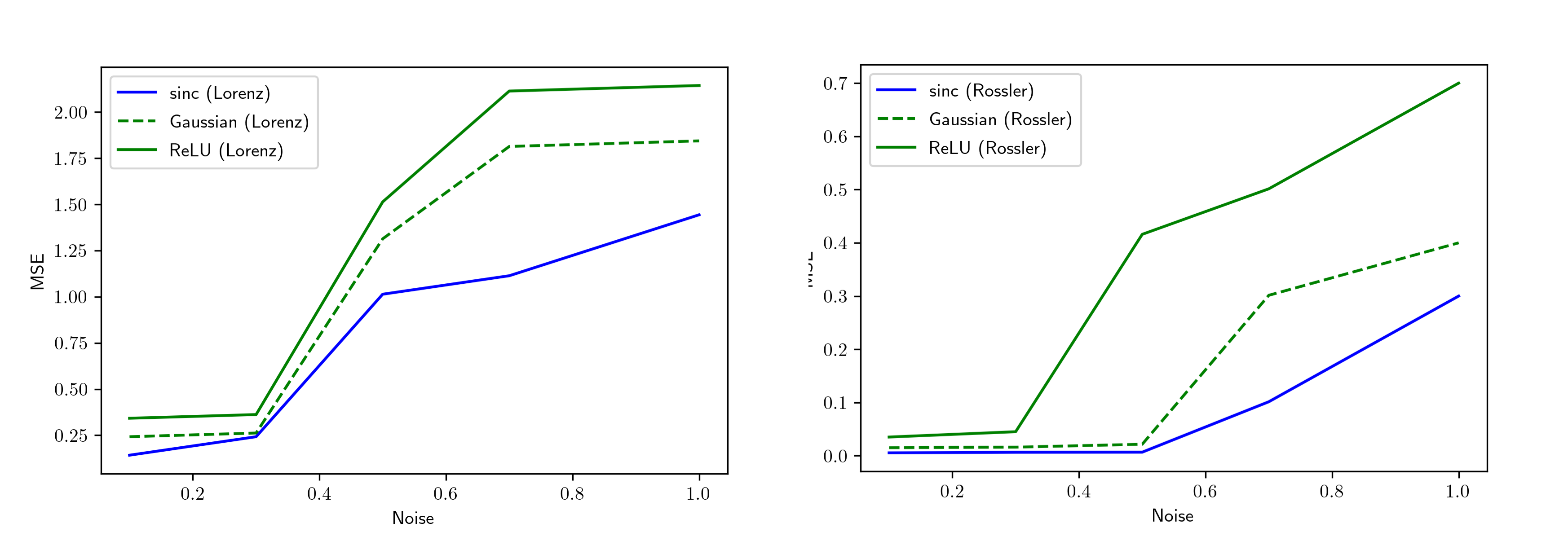}
    \caption{Performance comparison of activation functions in improving SINDy. Sinc activation yields the best results.}
    \label{fig:sindyq}
\end{figure}

\section{Dynamical equations}\label{ds_eqns}

\textbf{Lorentz System:} For the Lorenz system we take the parameters, 
$\sigma = 10$, $\rho = 28$ and $\beta = \frac{8}{3}$. The equations defining the 
system are:
\begin{align}
\frac{dx}{dt} &= \sigma(-x + y) \label{lorenz_sys_eq1} \\
\frac{dy}{dt} &= -xz + \rho x - y \label{lorenz_sys_eq2} \\
\frac{dz}{dt} &= -xy - \beta z \label{lorenz_sys_eq3} 
\end{align}

\textbf{Van der Pol Oscillator:} For the Van der Pol oscillator we take the parameter, 
$\mu = 1$. The equations defining the system are:
\begin{align}
\frac{dx}{dt} &= \mu(x - \frac{1}{3}x^3 - y) \label{vander_sys_eq1} \\
\frac{dy}{dt} &= \frac{1}{\mu}x \label{vander_sys_eq2} 
\end{align}

\textbf{Chen System:} For the Chen system we take the parameters, 
$\alpha = 5$, $\beta = -10$ and $\delta = -0.38$. The equations defining the 
system are:
\begin{align}
\frac{dx}{dt} &= \alpha x - yz \label{chen_sys_eq1} \\
\frac{dy}{dt} &= \beta y + xz \label{chen_sys_eq2} \\
\frac{dz}{dt} &= \delta z + \frac{xy}{3} \label{chen_sys_eq3} 
\end{align}

\textbf{R\"{o}ssler System:} For the R\"{o}ssler system we take the parameters, 
$a = 0.2$, $b = 0.2$ and $c = 5.7$. The equations defining the 
system are:
\begin{align}
\frac{dx}{dt} &= -(y + z) \label{ross_sys_eq1} \\
\frac{dy}{dt} &= x + ay \label{ross_sys_eq2} \\
\frac{dz}{dt} &= b + z(x-c) \label{ross_sys_eq3} 
\end{align}

\newpage

\textbf{Generalized Rank 14 Lorentz System:} For the following system we take parameters
$a = \frac{1}{\sqrt{2}}$, $R = 6.75r$ and $r = 45.92$. The equations defining the system are:
\begin{align}
    \frac{d\psi_{11}}{dt} &= -a\left(\frac{7}{3}\psi_{13}\psi_{22} + \frac{17}{6}\psi_{13}\psi_{24} + \frac{1}{3}\psi_{31}\psi_{22} + \frac{9}{2}\psi_{33}\psi_{24}\right) - 
    \sigma\frac{3}{2}\psi_{11} + \sigma a \frac{2}{3}\theta_{11} \label{rank_14_eq1} \\
    \frac{d\psi_{13}}{dt} &= a\left(-\frac{9}{19}\psi_{11}\psi_{22} + 
    \frac{33}{38}\psi_{11}\psi_{24} + \frac{2}{19}\psi_{31}\psi_{22} - \frac{125}{38}
    \psi_{31}\psi_{24}
    \right) - \sigma\frac{19}{2}\psi_{13} + \sigma a\frac{2}{19}\theta_{13} \label{rank_14_eq2} \\
    \frac{d\psi_{22}}{dt} &= a\left(\frac{4}{3}\psi_{11}\psi_{13} - \frac{2}{3}\psi_{11}\psi_{31} - \frac{4}{3}\psi_{13}\psi_{31} \right) - 
    6\sigma\psi_{22} + \frac{1}{3}\sigma a\theta_{22} \label{rank_14_eq3} \\
    \frac{d\psi_{31}}{dt} &= a\left(\frac{9}{11}\psi_{11}\psi_{22} + \frac{14}{11}
    \psi_{13}\psi_{22} + \frac{85}{22}\psi_{13}\psi_{24}\right) - \frac{11}{2}\sigma
    \psi_{31} + \frac{6}{11}\sigma a\theta_{31} \label{rank_14_eq4} \\
    \frac{d\psi_{33}}{dt} &= a\left(\frac{11}{6}\psi_{11}\psi_{24} \right) - 
    \frac{27}{2}\sigma\psi_{33} + \frac{2}{9}\sigma a\theta_{33} \label{rank_14_eq5} \\
    \frac{d\psi_{24}}{dt} &= a\left(-\frac{2}{9}\psi_{11}\psi_{13} - \psi_{11}\psi_{33} + \frac{5}{9}\psi_{13}\psi_{31} \right) - 18\sigma\psi_{24} + \frac{1}{9}\sigma a\theta_{24} \label{rank_14_eq6} \\
    \frac{d\theta_{11}}{dt} &= a\bigg{(}\psi_{11}\theta_{02} + \psi_{13}\theta_{22} - 
    \frac{1}{2}\psi_{13}\theta_{24} - \psi_{13}\theta_{02} + 2\psi_{13}\theta_{04} 
    + \psi_{22}\theta_{13} + \psi_{22}\theta_{31} + \psi_{31}\theta_{22} \label{rank_14_eq7}\\
    &\hspace{2cm} +
    \frac{3}{2}\psi_{33}\theta_{24} - \frac{1}{2}\psi_{24}\theta_{13} + 
    \frac{3}{2}\psi_{24}\theta_{33} 
    \bigg{)}
    + Ra\psi_{11} - \frac{3}{2}\theta_{11} \nonumber \\
    \frac{d\theta_{13}}{dt} &= a\bigg{(} 
    -\psi_{11}\theta_{22} + \frac{1}{2}\psi_{11}\theta_{24} - \psi_{11}\theta_{02} + 
    2\psi_{11}\theta_{04} - \psi_{22}\theta_{11} - 2\psi_{31}\theta_{22} \label{rank_14_eq8}\\ 
    &\hspace{2.5cm}+ 
    \frac{5}{2}\psi_{31}\theta_{24} + \frac{1}{2}\psi_{24}\theta_{11} + 
    \frac{5}{2}\psi_{24}\theta_{31}
    \bigg{)} + Ra\psi_{13} - \frac{19}{2}\theta_{13} \nonumber \\
    \frac{d\theta_{22}}{dt} &= a\bigg{(} 
    \psi_{11}\theta_{13} - \psi_{11}\theta_{31} - \psi_{13}\theta_{11} + 2\psi_{13}\theta_{31} + 4\psi_{22}\theta_{04} - \psi_{33}\theta_{11} + 
    2\psi_{24}\theta_{02}
    \bigg{)} + 2Ra\psi_{22} - 6\theta_{22} \label{rank_14_eq9} \\
     \frac{d\theta_{31}}{dt} &= a\bigg{(} 
    \psi_{11}\theta_{22} - 2\psi_{13}\theta_{22} + \frac{5}{2}\psi_{13}\theta_{24} - 
    \psi_{22}\theta_{11} + 2\psi_{22}\theta_{13} + 4\psi_{31}\theta_{02} - 
    4\psi_{33}\theta_{02} \label{rank_14_eq10} \\ 
    &\hspace{2cm}+ 8\psi_{33}\theta_{04} - \frac{5}{2}\psi_{24}\theta_{13}    
    \bigg{)} + 3Ra\psi_{31} - \frac{11}{2}\theta_{31} \nonumber \\
    \frac{d\theta_{33}}{dt} &= a\bigg{(}\frac{3}{2}\psi_{11}\theta_{24} - 
    4\psi_{31}\theta_{02} + 8\psi_{31}\theta_{04} - \frac{3}{2}\psi_{24}\theta_{11} 
    \bigg{)} + 3Ra\psi_{33} - \frac{27}{2}\theta_{33} \label{rank_14_eq11} \\
     \frac{d\theta_{24}}{dt} &= a\bigg{(}\frac{1}{2}\psi_{11}\theta_{13} - \frac{3}{2}\psi_{11}\theta_{33} + \frac{1}{2}\psi_{13}\theta_{11} - \frac{5}{2}\psi_{13}\theta_{31} - 2\psi_{22}\theta_{02}  \label{rank_14_eq12} \\
     &\hspace{3cm}- \frac{5}{2}\psi_{31}\theta_{13}
    - \frac{3}{2}\psi_{33}\theta_{11}
     \bigg{)} + 2Ra\psi_{24} - 18\theta_{24} \nonumber \\
     \frac{d\theta_{02}}{dt} &= a\bigg{(}-\frac{1}{2}\psi_{11}\theta_{11} + 
     \frac{1}{2}\psi_{11}\theta_{11} + \frac{1}{2}\psi_{11}\theta_{13} + 
     \frac{1}{2}\psi_{13}\theta_{11} + \psi_{22}\theta_{24} \label{rank_14_eq13} \\
     &\hspace{3cm}- 
     \frac{3}{2}\psi_{31}\theta_{31} + \frac{3}{2}\psi_{31}\theta_{33} + \frac{3}{2}\psi_{33}\theta_{31} + \psi_{24}\theta_{24}
     \bigg{)} - 4\theta_{02} \nonumber \\
     \frac{d\theta_{04}}{dt} &= -a\bigg{(}\psi_{11}\theta_{13} + \psi_{13}\theta_{11} + 2\psi_{22}\theta_{22} + 4\psi_{31}\theta_{33} + 4\psi_{33}\theta_{31}\bigg{)} - 
     16\theta_{04}  \label{rank_14_eq14} 
\end{align}

\section{On Taken's embedding theorem}
\label{sec:taken}

Taken's embedding theorem is a delay embedding theorem giving conditions under which the strange attractor of a dynamical system can be reconstructed from a sequence of observations of the phase space of that dynamical system.

The theorem constructs an embedding vector for each point in time
\begin{equation*}
	x(t_i) = [x(t_i), x_(t_i + n\Delta t),\ldots ,x(t_i + (d-1)n\Delta t)]
\end{equation*}

Where $d$ is the embedding dimension and $n$ is a fixed value.
The theorem then states that in order to reconstruct the dynamics in phase space for any $n$ the following condition must be met
\begin{equation*}
 d \geq 2D + l
\end{equation*}
where $D$ is the box counting dimension of the strange attractor of the dynamical system which can be thought of as the theoretical dimension of phase space for which the trajectories of the system do not overlap. 

\textbf{Problems with the theorem:} The theorem does not provide conditions as to what the best $n$ is and in practise when $D$ is not known it does not provide conditions for the embedding dimension $d$. The quantity $n\Delta t$ is the amount of time delay that is being applied. Extremely short time delays cause the values in the embedding vector to almost be the same, and extremely large time delays cause the value to be uncorrelated random variables. The following papers show how one can find the time delay in practise 
\cite{kim1999nonlinear, small2005applied}.
Furthermore, in practise estimating the embedding dimension is often done by a false nearest neighbours algorithm \cite{kennel1992determining}.

Thus in practise time delay embeddings for the reconstruction of dynamics can require the need to carry further experiments to find the best time delay length and embedding dimension.


\begin{figure}
    \centering
    \includegraphics[width=1.\columnwidth]{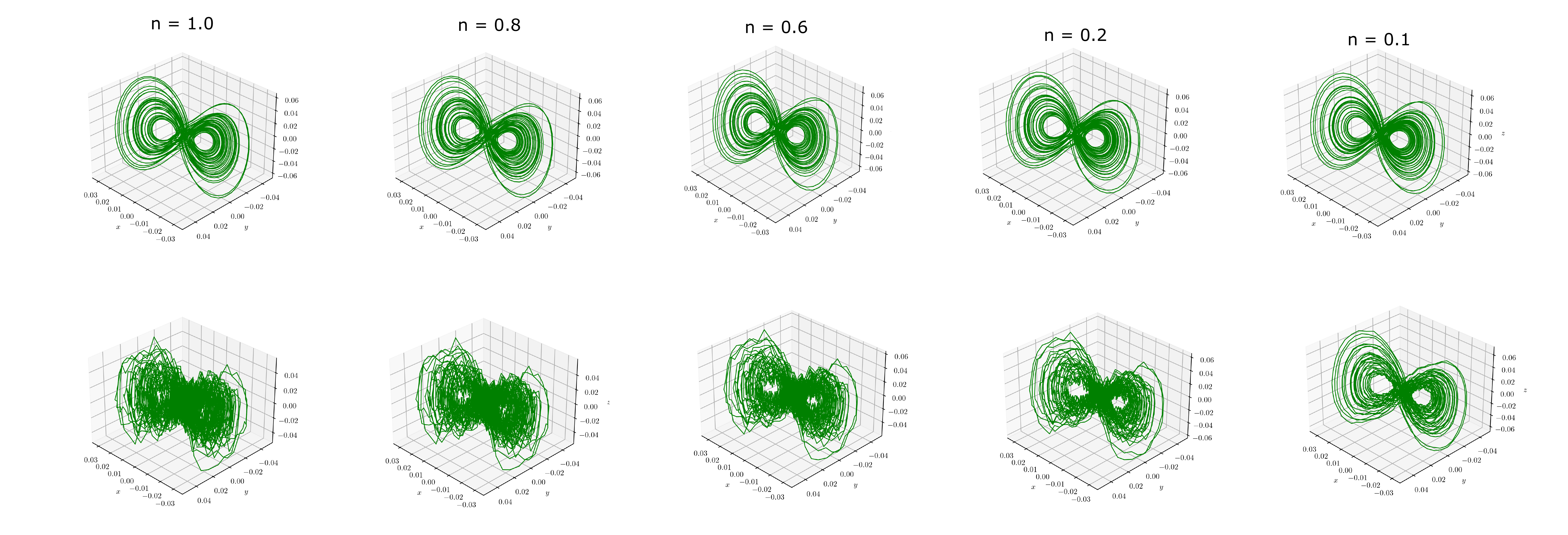}
    \caption{Robust recovery of dynamical systems from partial observations (Lorenz system). \textit{Top row:} coordinate network. \textit{Bottom row:} classical method.}
    \label{fig:diff_lorenz}
\end{figure}

\begin{figure}
    \centering
    \includegraphics[width=1.\columnwidth]{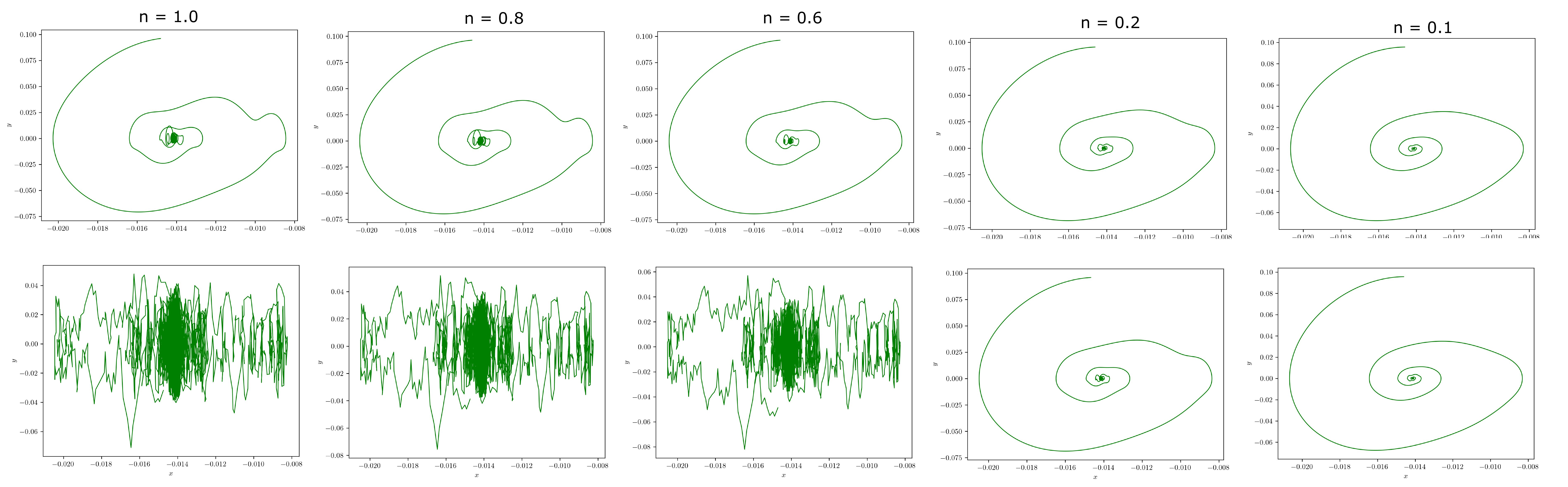}
    \caption{Robust recovery of dynamical systems from partial observations (Duffing system). \textit{Top row:} coordinate network. \textit{Bottom row:} classical method.}
    \label{fig:diff_duffing}
\end{figure}

\end{document}